\def\eqref#1{equation~\ref{#1}}
\def\1{\bm{1}}
\DeclareMathAlphabet{\mathsfit}{\encodingdefault}{\sfdefault}{m}{sl}
\SetMathAlphabet{\mathsfit}{bold}{\encodingdefault}{\sfdefault}{bx}{n}
\theoremstyle{definition}
\newtheorem{theorem}{Theorem}[section]
\newcommand{\tfive}{Flan-T5-large\xspace}
\newcommand{\phione}{phi-1.5\xspace}
\newcommand{\xwin}{Xwin-13B\xspace}
\newcommand{\wizard}{WizardMath-13B\xspace}
\newcommand{\vicuna}{Vicuna-13B-v1.3\xspace}
\newcommand{\llama}{Llama2-13B-chat\xspace}
\newcommand{\chat}{GPT-3.5-Turbo\xspace}
\newcommand{\gptfour}{GPT-4\xspace}
\newcommand{\llms}{LLMs\xspace}
\newcommand{\method}{\textsc{DyVal}\xspace}
\newcommand{\prompt}[1]{{\small \ttfamily #1}\xspace}
\newcommand{\blue}[1]{{\color{black}{#1}}}
\title{\blue{\method: Dynamic Evaluation of Large\\ Language Models for Reasoning Tasks}}
\author{
\textbf{Kaijie Zhu}$^{1}$\thanks{Equal contribution. Contact: kaijiezhu11@gmail.com, jiaaochen@gatech.edu.}, \textbf{Jiaao Chen}$^{2*}$, \textbf{Jindong Wang}$^{1}$\thanks{Correspondence to: Jindong Wang $<$jindong.wang@microsoft.com$>$.}, \textbf{Neil Zhenqiang Gong}$^3$, \textbf{Diyi Yang}$^4$, \textbf{Xing Xie}$^1$\\
$^{1}$Microsoft Research, $^{2}$Georgia Tech, $^{3}$Duke University, $^{4}$Stanford University
}
\begin{document}
\etocdepthtag.toc{chapter}
\etocsettagdepth{chapter}{none}
\etocsettagdepth{appendix}{none}

\maketitle

\begin{abstract}

Large language models (\llms) have achieved remarkable performance in various evaluation benchmarks. However, concerns are raised about potential data contamination in their considerable volume of training corpus. Moreover, the static nature and fixed complexity of current benchmarks may inadequately gauge the advancing capabilities of \llms. 
In this paper, we introduce \textbf{\method}, a general and flexible protocol for dynamic evaluation of \llms. Based on our framework, we build graph-informed \method by leveraging the structural advantage of directed acyclic graphs to dynamically generate evaluation samples with controllable complexities. \method generates challenging evaluation sets on reasoning tasks including mathematics, logical reasoning, and algorithm problems. We evaluate various \llms ranging from \tfive to \chat and \gptfour. Experiments show that \llms perform worse in \method-generated evaluation samples with different complexities, highlighting the significance of dynamic evaluation.
We also analyze the failure cases and results of different prompting methods.
Moreover, \method-generated samples are not only evaluation sets, but also helpful data for fine-tuning to improve the performance of \llms on existing benchmarks.
We hope that \method can shed light on future evaluation research of \llms.
Code is available at: \url{https://github.com/microsoft/promptbench}.

\end{abstract}

\section{Introduction}
Large Language Models (\llms) have recently achieved unprecedented performance across diverse tasks \citep{openai2023gpt4,bubeck2023sparks}.
The great endeavor have led to positive speculation on the possibility of \llms being precursors of artificial general intelligence, necessitating the creation of nuanced evaluations.
By pinpointing gaps for improvements, evaluation becomes the bedrock that enhances the understanding of current models and ensures AI's continued progression. 

Efforts to evaluate \llms have become intensified significantly. \citet{liang2022holistic} introduced HELM, which offers a holistic assessment of LLM in various scenarios. Similarly, Chatbot Arena \citep{zheng2023judging} evaluates LLMs by contrasting their generated output. Other benchmarks that have set the standard in the realm of LLM evaluations include AlpacaEval \citep{li2023alpacaeval}, C-Eval \citep{huang2023c}, ARB \citep{sawada2023arb}, API-Bank \citep{li2023api}, Socket \citep{choi2023llms}, and Big-Bench \citep{srivastava2023beyond}.
Moreover, manual experiments have emerged as a complementary approach to these benchmarks, with works such as \citet{bubeck2023sparks} and \citet{bang2023multitask}. Complementing these, human evaluators have also been instrumental in gauging the prowess of LLMs, as discussed by \citet{ziems2023can} and \citet{causal2023}.

Current evaluation benchmarks face two fundamental challenges. First, \textbf{data contamination.} 
Many benchmarks source their data from the Internet, causing potential overlap with the vast corpus on which \llms are trained, leading to the debate of ``Generalization vs. Memorization'' \citep{stochasticparrot, magar2022data, carlini2023quantifying, biderman2023emergent}: \emph{Are the model's results stemming from genuine ability or just memorization of the training data}?
A recent example is provided by \citet{causal2023}: \llms can ambiguously deduce the conclusion that altitude influences temperature based on seen data.
Similarly, \citet{berglund2023reversal} found that \llms trained on ``A is B'' fail to infer ``B is A'', which doubts the abilities of \llms might come from memorization.
Second, \textbf{static dataset and fixed complexity.}
As \llms progress at a rapid pace, existing datasets usually fail to match the models' ever-evolving capabilities, because
the \emph{complexity} level of existing benchmarks is usually static and fixed.
As \citet{dziri2023faith} demonstrated, while handling simple problems pretty well, \llms fail to solve complex problems. The inability to automatically and dynamically increase complexity levels based on existing data prevents static benchmarks from being adapted to accurately select, compare, and advance \llms. Although there are a few existing dynamic benchmarks like DynaBench~\citep{kiela2021dynabench} and DynaBoard~\citep{ma2021dynaboard}, they rely on crowd-sourcing efforts for data collection, which might be expensive and tedious.


In this paper, we introduce \textbf{\method}—a novel, general, and flexible evaluation protocol for the \emph{dynamic} evaluation of \llms (Sec. \ref{sec-method-language}). 
The core of \method is to dynamically \textit{generate} evaluation samples on the fly instead of collecting a fixed set of data. \method consists of three components: 1) the generation algorithm $\mathcal{G}$ to generate test samples with diversities; 2) the constraint $\mathcal{C}$ to modulate sample complexity and validity; and 3) the description function $\mathcal{F}$ to translate the generated samples into natural languages. Based on this framework, we propose a graph-informed \method (Sec. \ref{sec-method-graph}, \figurename~\ref{fig: pipeline}) to generate data using graphs.
Specifically, 
inspired by techniques such as the compiler principle  \citep{alfred2007compilers} and parsing trees which decompose complexities \citep{klein2003accurate, vinyals2015grammar}, we employ directed acyclic graphs (DAG)~\citep{thulasiraman2011graphs} to \textit{compose} fundamental elements into more intricate problems, with each unit symbolized as a graph node.
The extendable and stochastic nature of graph generation effectively regulates the complexity levels.
Additionally, the hierarchical attributes of graphs suit them for multi-step inferential tasks like logics.
Problems generated by \method not only require profound understanding of problem solving rather than simple memorization but also echo the human approach to incremental problem-solving and solution derivation. Being general and flexible, \method co-exists and co-evolves with existing benchmarks for better \llms evaluation and evolution.

We leverage \method to synthesize $7$ reasoning tasks\footnote{We choose reasoning tasks mainly due to 
(1) the intrinsic connection between reasoning proficiency and intelligence;
(2) the notable progress \llms have achieved in reasoning-centric tasks \citep{sawada2023arb}. Note that \method could also be applied to existing benchmarks to create new and harder evaluation data.}, encompassing:  
(1) Mathematics: arithmetic and linear equations; (2) Logical reasoning: boolean, deductive, and abductive logic; (3) Algorithm: reachability and maximum sum path problems.
We then re-examine the state-of-the-art \llms ranging from \tfive~\citep{t5}, \phione~\citep{phi1.5}, \xwin~\citep{xwin-lm}, \llama~\citep{llama}, \vicuna~\citep{vicuna},  \wizard~\citep{luo2023wizardmath}, to  \chat~\citep{chatgpt} and GPT-4~\citep{openai2023gpt4} with \method. We also test with recent prompting techniques including Few-shot \citep{brown2020language}, CoT \citep{wei2023chainofthought}, Least to Most prompting \citep{zhou2022least}, Automatic Prompt Engineering \citep{zhou2022large}, and Skills-in-Context prompting \citep{chen2023skills}.
Finally, we perform human study involving $82$ human evaluators for comparison and fine-tuning experiments using \method-generated evaluation samples.
Furthermore, experiments on existing benchmarks also show that fine-tuning \llms with data generated by \method could directly improve models' abilities without extra careful collection of training data \citep{zhou2023lima}.
\blue{We further show the flexibility of \method by extending it to natural language tasks in Appendix~\ref{sec-app-flex}.}
Our key findings are:
\begin{itemize}[leftmargin=1em]
\setlength\itemsep{0em}
    \item \textbf{Results on \method evaluation are not always consistent with those on existing benchmarks, indicating possible low training data quality and/or data contamination of existing \llms} (Sec. \ref{sec-exp-results}). For instance, \phione, \wizard, and \xwin perform poorly on \method while claiming huge improvements on existing benchmarks.
    \item \textbf{As difficulty increases, \llms tend to perform worse and their performance gap becomes larger, emphasizing the lack of compositionality of current \llms and the importance of evolving complexity evaluations} (Sec. \ref{sec-exp-results}).
    \item \textbf{Our error analysis based on \method evaluation exhibits various failure patterns} which shed light on how to further improve \llms. (Sec. \ref{sec-exp-casestudy}).
    \item \textbf{No prompt engineering methods can perform best in all of our evaluation sets; and larger model sizes tend to achieve better performances} (Sec. \ref{sec-exp-ablation}).
    \item \textbf{\method can further be utilized to generate training data to improve the abilities of \llms.} (Sec. \ref{sec-finetune}). For instance, fine-tuning the Llama2 models with our \method generated data demonstrates enhanced results on $6$ existing benchmarks.
\end{itemize}


To sum up, this paper makes the following contributions:
\begin{itemize}[leftmargin=2em]
\setlength\itemsep{0em}
    \item \textbf{A dynamic evaluation protocol.} \method is a dynamic evaluation protocol designed to generate test samples dynamically, mitigating the issues of data contamination and static complexity.
    
    \item \textbf{A graph-informed \method algorithm for evaluation of the reasoning abilities of \llms.} We use DAGs to compose $7$ reasoning problems from mathematics, logical reasoning to algorithms.
    
    \item \textbf{Extensive experiments and analysis.} We conduct extensive experiments to provide insights for evaluating and improving \llms.
\end{itemize}

\section{Related Work}
\label{sec-related}

\paragraph{Evaluating \llms.}
While neural networks are recognized as the universal function approximators \citep{cybenko1989approximation} with remarkable data fitting capabilities \citep{zhang2021understanding, arpit2017closer}, debates \citep{stochasticparrot, zhang2021understanding, memorization2022, magar2022data, carlini2023quantifying, wu2023reasoning, tang2023large, causal2023, kocon2023chatgpt, schaeffer2023pretraining, biderman2023emergent, zhu2023physics} persist regarding the true nature of \llms' generalization abilities. The growing prominence of \llms necessitates rigorous benchmarks \citep{mmlu, alpaca_eval, zhong2023agieval, leaderboard}. Recent trends include: (1) human-centric evaluations \citep{gao2022adaptive, ribeiro2022adaptive}, (2) crowd-sourced testing \citep{kiela2021dynabench, ma2021dynaboard}, and (3) specialized task challenges \citep{liang2022holistic, tian2018deeptest, ribeiro2020beyond, srivastava2023beyond}. Complementing with these, our \method introduces a dynamic evaluation system, consistently relevant in the swiftly evolving landscape of AI. Although \citet{krause2018dynamic} introduced the term ``dynamic evaluation'', our \method differs considerably in its approach and goals.
\blue{Specifically, reasoning is widely recognized as the core of both human and AI. Our focus on constructing reasoning tasks mirrors the intricate and multi-step nature of human reasoning \citep{brody1999intelligence, lohman2011intelligence, sawada2023arb}, building reasoning benchmarks is a critical step to help LLMs towards intelligence.}

\blue{
\paragraph{Data Contamination.}
Researchers start to realize the potential data contamination problem in LLMs \citep{lovin2023gpt4, chowdhuri2023gpt4mit, stochasticparrot, kocon2023chatgpt}.
The GPT-4 and LLama reports clearly stated the phenomenon of data contamination. Recently, \cite{zhou2023don} discussed the risks and impacts of data contamination of evaluation benchmarks in assessing LLMs. \cite{li2023open} examined the data contamination problem of LLama models. The Skywork LLM \cite{wei2023skywork} again demonstrated the data contamination issue in several.
\cite{golchin2023data, golchin2023time, oren2023proving, yang2023rethinking} designed novel methods to detect the data contamination of LLMs.
\method is not a detection approach but a new protocol to mitigate the contamination issue.
}

\paragraph{Complex-to-simple problem decomposition and evaluation set construction.}
Employing \textit{graphs} to deconstruct complex tasks has been an enduring and effective strategy across domains. Compilers, as seen in computational theory \citep{alfred2007compilers}, effectively break down high-level constructs, while in NLP, parsing trees bring clarity to intricate syntactic and semantic structures \citep{klein2003accurate, vinyals2015grammar}. \citet{roy2016solving} displayed the potency of this method in arithmetic, using trees for solving multi-step problems. Additionally, several contemporary techniques have implored \llms to decompose complex problems \citep{wei2023chainofthought, zhou2022least, khot2022decomposed, zhang2023cumulative}. Several studies have leveraged graph-based approaches for constructing compositional tasks, particularly in the domains of first-order logic \citep{sinha2019clutrr, clark2020transformers, tian2021diagnosing} and causal reasoning \citep{jin2023large}. \textit{\method} presents notable distinctions in both objective and methodology.
\blue{Additionally, GraphWorld \citep{palowitch2022graphworld} primarily benchmarks Graph Neural Networks (GNNs), whereas \method focuses on \llms using the graph structure. They are different in nature. }

\vspace{-0.1in}
\section{\method}
\vspace{-0.1in}
\label{sec-method}

In this section, we first elucidate our general dynamic evaluation protocol to address the challenges of data contamination with dynamic data generation and controllable complexity in Sec.~\ref{sec-method-language}.
We then adapt this general protocol for reasoning tasks by leveraging the Directed Acyclic Graphs (DAGs) in Sec.~\ref{sec-method-graph}.
More analysis of the flexibility of \method is in Sec.~\ref{sec-method-analysis}.
 

\begin{figure}[t!]
    \centering
    \includegraphics[width=\textwidth]{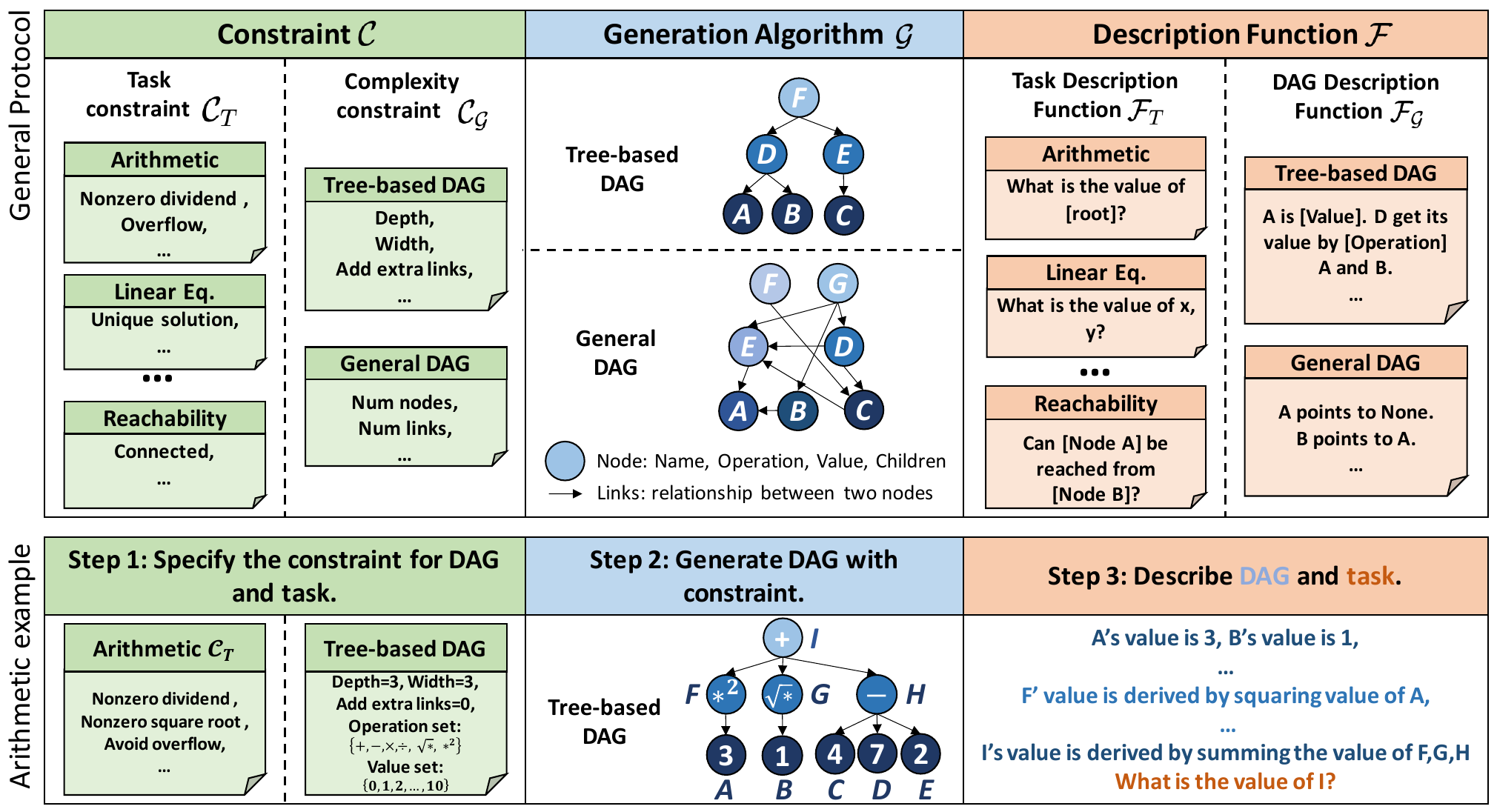}
    \vspace{-0.2in}
    \caption{The pipeline of the graph-informed \method. Up: the general evaluation framework; down: an arithmetic example. More details can be found at Sec.~\ref{sec-method-graph} and Appendix~\ref{sec-append-detail}.}
    \label{fig: pipeline}
    \vspace{-0.2in}
\end{figure}

\vspace{-0.05in}
\subsection{General Dynamic Evaluation Description Language}
\label{sec-method-language}
\vspace{-0.05in}

First, we introduce the general description language of the dynamic evaluation protocol. 
Given a task $T$, a dynamic evaluation algorithm is formulated as $\mathcal{A}_{T}=\mathcal{F}(\mathcal{G}(\mathcal{C}))$, where
\textbf{(1)} $\mathcal{G}$ is the \textbf{sample generation algorithm}, incorporating randomness to guarantee uniqueness of each sample.
Randomness may vary in different tasks, such as the numbers in math problems and the logic chains in a logic reasoning task.
\textbf{(2)} $\mathcal{C}=\{\mathcal{C}_{T}, \mathcal{C}_{\mathcal{G}}\}$ denotes \textbf{constraints} on $\mathcal{G}$, where $\mathcal{C}_T$ is the task constraint for task $T$ such as the legality guarantee of the generated samples in the context of the task. $\mathcal{C}_{\mathcal{G}}$ is the complexity constraint for the generation process, such as the sampling strategy for the value at each node and the number of perturbations added to the evaluation samples.
\textbf{(3)} $\mathcal{F}=\{\mathcal{F}_T, \mathcal{F}_{\mathcal{G}}\}$ is the \textbf{description function} to translate the raw evaluation samples generated by $\mathcal{G}$ into natural language descriptions. $\mathcal{F}_{\mathcal{G}}$ elucidates the characteristics and properties of the samples generated by $\mathcal{G}$.
$\mathcal{F}_T$ is the description for task $T$ such as task objective and expected outcomes.

In general, an evaluation sample can be represented as $d_{\text{eval}} = \mathcal{F}_T(\mathcal{F}_{\mathcal{G}}(\mathcal{G}(\mathcal{C}_{\mathcal{G}}, \mathcal{C}_{T})))$ using the above description language.
$\mathcal{G}$ first produces a sample that adheres to the complexity constraint $\mathcal{C}_{\mathcal{G}}$ and the task constraint $\mathcal{C}_T$.
Then it undergoes transformation by description function $\mathcal{F}_{\mathcal{G}}$ into a natural language format and finally goes through the task description function $\mathcal{F}_T$. 
The description language above naturally (1) avoids data contamination by dynamic generation through $\mathcal{G}$, and (2) promises dynamic datasets and controllable complexity through $\mathcal{C}$.
Specifically, by varying the constraints in $\mathcal{C}$, we can generate evaluation samples of different difficulties, allowing ``co-evolution'' of both the \llms and the evaluation process.
The description language is flexible since it allows for different generation algorithms and complexity control by changing $\mathcal{G}$ and $\mathcal{C}$ accordingly.

\vspace{-0.05in}
\subsection{Graph-informed Dynamic Evaluation for Reasoning Tasks}
\label{sec-method-graph}
\vspace{-0.05in}
In this section, following the general evaluation description language, we implement \method for reasoning tasks by taking inspiration from the graph structure.
Given the intrinsic multistep inferential nature of reasoning tasks, they inherently exhibit structural characteristics, making directed acyclic graphs (DAGs) a natural choice for modeling these tasks.
DAGs also facilitate dynamic sample generation by modulating the internal structure and fine-grained control over problem difficulty by adjusting the structural complexity.
More background of DAGs can be found in Appendix~\ref{append-graph}.

\vspace{-0.05in}
\subsubsection{Generation Algorithm $\mathcal{G}$: DAG Construction}
\label{sec-method-generation}
\vspace{-0.05in}
The generation algorithm is established on the graph construction process.
We categorize DAGs as Tree-based DAGs (T-DAGs) and General DAGs (G-DAGs), illustrated in \figurename~\ref{fig: pipeline}. T-DAGs are inherently hierarchical, making them suitable for tasks that proceed from a set of initial premises to a final inference, such as arithmetic problems and logical reasoning tasks. Each node in T-DAGs represents a foundational subproblem. These subproblems are chained by the links between nodes and finally form a complex problem.
Conversely, G-DAGs excel in mapping intricate relationships, especially in tasks demanding understanding of non-linear interactions. They are ideal for algorithmic challenges involving complex dependencies. For instance, imagine modeling a system where a change in one entity might impact multiple others in a cascading fashion, or tasks require finding different potential pathways between entities. The generation process for these two types of DAGs are presented in Appendix~\ref{sec-append-graph-generation-algo}.

\textbf{Randomness in DAGs generation process.}
T-DAG randomness arises from operations assigned to the nodes and the initial values of the leaf nodes.
For instance, in arithmetic, the operation can be ``$+$'', with the leaf nodes receiving random numbers.
On the other hand, for G-DAGs, each node is endowed with a random value (if needed for a certain problem). 
For every node, the number of children is determined randomly, and the maximum number of children depends on the input.
We then establish the links by selecting the target child nodes at random.

Theorems~\ref{theorem: tree-based} and \ref{theorem: general} formally guarantee the dynamic generation process by exploring the probability that two samples generated by T-DAG and G-DAG are identical. We focus exclusively on the base case, setting aside additional complexities like the integration of random links or the embedding of random descriptions, which would further diminish the likelihood of two DAGs being identical.

\vspace{-0.05in}
\begin{theorem}
\label{theorem: tree-based}
Given a tree-based DAG with depth $d$ and width $w$, if the operation set for non-leaf nodes has $k$ distinct operations and the value set for leaf nodes contains $n$ distinct values, the probability that two independently generated DAGs are identical is: 
$ P = \left(k^{\frac{w^{d-1}-1}{w-1}} \times n^{w^{d-1}} \right)^{-1}. $
\end{theorem}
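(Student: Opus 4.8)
The plan is to count the number of distinct labelled tree-based DAGs that $\mathcal{G}$ can produce in the base case, show that $\mathcal{G}$ samples each of them with equal probability, and then observe that the collision probability of two independent draws from the uniform distribution on a set of size $N$ is exactly $1/N$. So the whole argument reduces to computing $N$, the number of possible outcomes.

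First I would fix the structural skeleton. In the base case a tree-based DAG of depth $d$ and width $w$ is a complete $w$-ary tree: the shape is deterministic, so all randomness lives in the node labels. Counting the nodes by level, the non-leaf nodes sit at levels $0,1,\dots,d-2$, giving $1 + w + w^2 + \cdots + w^{d-2} = \frac{w^{d-1}-1}{w-1}$ internal nodes, each of which independently receives one of $k$ operations; and the leaf level $d-1$ contains $w^{d-1}$ nodes, each independently receiving one of $n$ values. Since the choices are made independently and uniformly across all nodes, the number of distinct generated DAGs is the product
\begin{equation}
N = k^{\frac{w^{d-1}-1}{w-1}} \cdot n^{w^{d-1}},
\end{equation}
and each one is produced with probability $1/N$.

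Next I would make the collision step precise: if $X$ and $Y$ are i.i.d.\ uniform on a finite set of size $N$, then $P(X = Y) = \sum_{i} P(X=i)P(Y=i) = N \cdot (1/N)^2 = 1/N$, which yields the claimed $P = \left(k^{\frac{w^{d-1}-1}{w-1}} \times n^{w^{d-1}}\right)^{-1}$. One subtlety worth a sentence is that ``identical'' here should mean identical as labelled trees (same operation at every internal position, same value at every leaf position), not merely isomorphic; because the tree positions are distinguishable in the construction, two independent runs that happen to assign the same labels position-by-position are the genuine collision event, and relabelling symmetries do not collapse distinct outcomes. I would state this convention explicitly so the count is unambiguous.

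The main obstacle is not the arithmetic but pinning down exactly what ``the base case'' strips away and arguing that the remaining randomness factorizes as independent uniform choices. Concretely I need: (i) the tree shape is fixed once $d$ and $w$ are given (no random branching in the base case), (ii) operation assignments and leaf-value assignments are mutually independent and uniform, and (iii) there are no additional label-identifications — the excerpt already says we set aside random links and random descriptions, so I would invoke that directly. Once these three points are nailed down, the geometric-series node count and the product rule finish the proof in a couple of lines.
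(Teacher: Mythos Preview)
Your proposal is correct and follows essentially the same approach as the paper: both fix the tree shape, count the independent uniform choices (operations at internal nodes, values at leaves), and take the reciprocal of the total count. The only cosmetic difference is that the paper organizes the product level by level (computing $p_i = 1/k^{w^{i-1}}$ for each non-leaf depth and $p_d = 1/n^{w^{d-1}}$ for the leaves, then multiplying), whereas you group by node type and sum the geometric series up front; your explicit remark that ``identical'' means identical as labelled trees matches the paper's note that $3\times 5$ and $5\times 3$ are counted as distinct.
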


\vspace{-0.1in}
\begin{theorem}
\label{theorem: general}
Given a general DAG with $n$ nodes where each node has a minimum of $l \geq 1$ links, the probability that two randomly selected DAGs are identical is bounded by $\frac{1}{(n-1)!}$.
\end{theorem}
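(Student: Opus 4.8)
The plan is to reduce the collision probability to the largest probability of producing any one fixed DAG, and then to bound that quantity by following the sequence of independent random choices made during G-DAG construction. Write $p_G$ for the probability that a single run of the generator outputs a particular DAG $G$ on the $n$ labelled nodes. Then the probability that two independent runs coincide is $\sum_G p_G^2 \le (\max_G p_G)\sum_G p_G = \max_G p_G$, since $\sum_G p_G = 1$. So it suffices to show $p_G \le \frac{1}{(n-1)!}$ for every admissible $G$.

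For the main step I would fix a topological ordering $v_1,\dots,v_n$ of the target $G$ and decompose the event ``the generator outputs $G$'' into the per-node events ``node $v_i$ is assigned exactly the out-neighbourhood it has in $G$''. Because links are only added toward nodes appearing later in the (implicit) ordering, node $v_i$ selects its children from the $n-i$ candidates $v_{i+1},\dots,v_n$, and because every node must carry at least $l\ge 1$ link, $v_i$ must select a nonempty subset of these $n-i$ candidates. Since the choices for distinct nodes are made independently, $p_G$ factors as a product over $i=1,\dots,n-1$ of the probability that $v_i$'s random child set equals the prescribed one. I would then argue each factor is at most $\frac{1}{n-i}$: the number of children is drawn from the range $\{l,\dots,n-i\}$ and, conditioned on that count, the child set is drawn uniformly among $\binom{n-i}{\,\cdot\,}$ possibilities, and a short worst-case check shows the product of these two probabilities never exceeds $\tfrac{1}{n-i}$ once $l\ge 1$ — the ``pick a single child'' branch being the dominant one, which is exactly where the hypothesis $l\ge1$ enters. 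Telescoping, $p_G \le \prod_{i=1}^{n-1}\frac{1}{n-i} = \frac{1}{(n-1)!}$.

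An equivalent combinatorial route I would keep in reserve in case the generator's distribution is awkward to pin down: exhibit an injection from the $(n-1)!$ orderings of the non-source nodes into the family of admissible DAGs (for instance send an ordering to its transitive tournament rooted at the fixed source, which manifestly has every degree $\ge 1$), so the generator's support has size at least $(n-1)!$; together with the near-uniformity observation this again gives the bound. Throughout, the statement is deliberately about the ``base case'', so I would not track the further reductions coming from optional random link insertion or random node descriptions, as these can only decrease the chance that two DAGs coincide.

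The main obstacle is the middle step: making rigorous the claim that each per-node matching probability is genuinely at most $\frac{1}{n-i}$ for the generator as specified. This requires committing to the precise distributions over (a) the number of children of each node and (b) which children get chosen, followed by a small monotonicity argument that the minimum-out-degree-one branch dominates; the sink $v_n$, which cannot emit a link, also needs a remark, handled either by reading the degree constraint as total degree or by observing that $v_n$ simply contributes a trivial factor $1$ to the product. The remaining pieces — the $\sum_G p_G^2 \le \max_G p_G$ reduction and the final product — are routine.
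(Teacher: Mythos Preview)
Your proposal is considerably more elaborate than what the paper actually does. The paper's argument is essentially your ``reserve'' combinatorial route, executed quite loosely: it restricts to the case where every node carries \emph{exactly} one outgoing link, argues by a sequential count (the first node links to one of $n-1$ targets, the second to one of the remaining $n-2$, and so on) that there are $(n-1)!$ such DAGs, reads off a collision probability of $1/(n-1)!$ under an implicit uniformity assumption, and then simply asserts that this bounds the general $l\ge 1$ case. There is no $\sum_G p_G^2$ reduction, no per-node factorization, and no treatment of non-uniform sampling.

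Your primary route via $\sum_G p_G^2\le\max_G p_G$ is more robust --- it does not need the generator to be uniform --- and the per-node decomposition is a natural way to control $\max_G p_G$. One small wobble worth flagging: the branch at which the bound $\tfrac{1}{n-i}$ is tight is not the single-child branch but the \emph{all}-children branch ($k_0=n-i$, where $\binom{n-i}{k_0}=1$). In particular, if the child count is uniform on $\{l,\dots,n-i\}$ with $l\ge 2$, that branch gives probability $\tfrac{1}{n-i-l+1}>\tfrac{1}{n-i}$, so the per-node claim can fail as stated unless the upper cap on links prevents $k_0=n-i$. Since the paper never pins down the generator's distribution and its own proof sidesteps this by working entirely in the exactly-one-link slice, the ``main obstacle'' you flag is genuine and would need model-specific assumptions to close. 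Your reserve injection argument, by contrast, matches the paper's intent and suffices for what the theorem is after.
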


\vspace{-0.1in}
Proofs can be found in Appendix~\ref{sec-append-proof}.
These theorems guarantee that the odds of producing identical evaluation samples are considerably low.
For instance, in the arithmetic task (where $k=6, n=10$) with $d=4$ and $w=2$, the chances that two DAGs are identical hover around $1e^{-15}$.

\subsubsection{Constraints $\mathcal{C}$ for Graph Generation}
\label{sec-method-constraint}

\textbf{Task constraint $\mathcal{C}_T$.}
Task constraints vary for tasks.
Take the node creation for instance: 1) What distribution should the node value adhere to? 2) What set of operations is permissible? 3) How should a node's value be computed from its children's values?
In arithmetic tasks, $\mathcal{C}_T$ includes ensuring that a dividend is nonzero, avoiding overflow, etc.
Here, we concentrate on two general task constraints:
(1) \textit{Value distribution $\mathcal{V}$:} Specifies the permissible range or distribution from which leaf node values can be assigned. For example, in logic reasoning tasks, the premises (leaf nodes) are assigned either as $\mathrm{True}$ or $\mathrm{False}$.
(2) \textit{Operation set $\mathcal{O}$:} Lists the operations allowed within the DAG. The operation set constraint is usually used for tree-based DAGs. For example, in an arithmetic task, the set of allowed operations can be defined as the basic arithmetic operations $\{+, -, \times, /\}$.

\begin{table}[t!]
\caption{Three types of reasoning tasks generated by \method.}
\label{tb-task}
\centering
\resizebox{\textwidth}{!}{
\begin{tabular}{c c c c c c c}
\toprule
\multirow{2}{*}{Field}     & \multirow{2}{*}{Task} & \multirow{2}{*}{\makecell{Generation\\algorithm $\mathcal{G}$}} & \multicolumn{2}{c}{Constraint $\mathcal{C}$}                                        & \multirow{2}{*}{\# Classes}           & \multirow{2}{*}{Description $\mathcal{F}$} \\
\cmidrule{4-5}
                           &                       &                                  & $\mathcal{C}_T$              & $\mathcal{C}_{\mathcal{G}}$            &                      \\
\midrule
\multirow{3}{*}{Mathematics}      & Arithmetic            & Tree-based                    & \makecell{$\mathcal{V}: \{1, 2, \ldots, 10\}$ \\ $\mathcal{O}: \{+, -, \times, \, \sqrt{\cdot}, \cdot^2 \}$} & \makecell{Depth, Width, \\ Extra links, Random desc} & -                   & \makecell{\prompt{What is the}\\ \prompt{value of [Root]?}} \\
\cmidrule{2-7}
                           & \makecell{Linear \\equation}       & Tree-based                    & \makecell{$\mathcal{V}: \{1, 2, ..., 10\}$ \\ $\mathcal{O}: \{+, -, \times, \, \sqrt{\cdot}, \cdot^2 \}$} & \makecell{Depth, Width, \\ Extra links, Random desc} & - & \makecell{\prompt{What is the}\\ \prompt{value of x and y?}}                    \\
\midrule
\multirow{5}{*}{\makecell{Logical \\Reasoning}}     & Bool                  & Tree-based                    & \makecell{$\mathcal{V}: \{\mathrm{True, False}\}$ \\ $\mathcal{O}: \mathrm{\{AND, OR, NOT\}}$} & \makecell{Depth, Width, \\ Extra links, Random desc} & \makecell{$2$ \\ $\mathrm{\{True, False\}}$} & \makecell{\prompt{What is the}\\ \prompt{value of [Root]?}}      \\
\cmidrule{2-7}
                           & Deductive             & Tree-based                    & \makecell{$\mathcal{V}: \{\mathrm{True, False}\}$ \\ $\mathcal{O}: \mathrm{\{AND, OR, NOT\}}$} & \makecell{Depth, Width, \\ Extra links, Random desc} & \makecell{$3$ \\ $\mathrm{\{True, False, N/A\}}$} & \makecell{\prompt{What is the}\\ \prompt{value of [Root]?}} \\
\cmidrule{2-7}
                           & Abductive             & Tree-based                    & \makecell{$\mathcal{V}: \{\mathrm{True, False}\}$ \\ $\mathcal{O}: \mathrm{\{AND, OR, NOT\}}$} & \makecell{Depth, Width, \\ Extra links, Random desc} & \makecell{$3$ \\ $\mathrm{\{True, False, N/A\}}$} & \makecell{\prompt{Given [Root] is [Value],}\\ \prompt{what is the value of [Leaf $i$]?}} \\
\midrule
\multirow{3}{*}{Algorithm} & Reachability          & General                       & \makecell{$\mathcal{V}: -$ \\ $\mathcal{O}: -$}  & \makecell{\# Nodes,  \# max links, \\ random desc} & \makecell{$2$ \\ $\mathrm{\{True, False\}}$} & \makecell{\prompt{Can [Node $i$] be}\\ \prompt{reached from [Node $j$]?}}      \\
\cmidrule{2-7}
                           & \makecell{Max sum\\ path}          & General                       & \makecell{$\mathcal{V}: \{1, 2, \ldots, 10\}$ \\ $\mathcal{O}: -$} & \makecell{\# Nodes,  \# max links, \\ random desc} & - & \makecell{\prompt{What is the maximum}\\ \prompt{path [Node $i$] to [Node $j$]?}}                   \\
\bottomrule
\end{tabular}
}
\vspace{-.2in}
\end{table}

\textbf{Complexity constraint $\mathcal{C}_{\mathcal{G}}$.}
We investigate $4$ techniques to inject complexity into DAGs (\figurename~\ref{fig: complexity}):
(1) \textit{Change width and depth for T-DAGs:} The natural way to control tree complexity.
(2) \textit{Change number of nodes and links for G-DAGs:} We control the total number of nodes in G-DAGs. The number of links in each node is randomly selected from a predefined range, e.g., $[1,5]$.
(3) \textit{Add extra random links:} For each node, we may introduce an additional link to another random node.
(4) \textit{Embed random descriptions:} Add random descriptions to the primary DAG's descriptions.
More details of complexity can be found in Appendix~\ref{sec-append-complexity} with \figurename~\ref{fig: complexity constraint demonstration} as illustrations.

\subsubsection{Description Function $\mathcal{F}$}
\label{sec-method-description}

After constructing DAGs with certain constraints, we then need to convert them into comprehensible natural language descriptions using the description function $\mathcal{F}$.

\textbf{DAG description function $\mathcal{F}_{\mathcal{G}}$.}
We describe the DAG node by node and then form the description of the nodes into sequences.
The interpretation of each node in natural language depends on its position and the task.
For leaf nodes that represent primary input or premises, they can be described as: ``\prompt{The value of [Name] is [Value]}.''
For instance, a node denoting number 5 could be expressed as: ``\prompt{The value of node A is 5}.''
For T-DAGs, the intermediate nodes that typically denote operations performed on their child nodes, the description can be formulated as: ``\prompt{The value of [Name] is derived by [Operation] the values of [Children's Names]}.'' For G-DAG, the intermediate nodes are usually described as the connections between nodes: ``\prompt{The [Name] points to [Children's Names]}''.
Note that natural language descriptions can be replaced according to custom needs and can be further incorporated with textual adversarial attacks \citep{textbugger, deepwordbug, textfooler, bertattack}.

Moreover, complexity is also influenced by the \emph{order} that nodes are described.
We design three orders: \emph{topological}, \emph{reversed topological}, and \emph{random} orders, each offering a unique challenge in understanding the DAGs.
The details of these orders are presented in Appendix~\ref{sec-append-description-order}.

\textbf{Task description function $\mathcal{F}_T$.}
The construction of $\mathcal{F}$ highly depends on the context of tasks.
Notebly, this construction is also highly flexible.
For instance, incorporating adversarial prompts~\citep{zhu2023promptbench} to the task description can make problems more difficult.
Here we present the task description function for arithmetic and reachability tasks that are representative of T-DAG and G-DAG, respectively.
Appendix~\ref{sec-append-description} presents details and examples of the remaining $5$ tasks.

\textit{Arithmetic:} Given a T-DAG, the DAG description function has already demonstrated the premise: the leaf nodes and the intermediate steps of inference: non-leaf nodes. Next, we select the root node as the variable required to solve, we append the question ``\prompt{What is the value of [Root]?}'' to the description where \prompt{[Root]} is filled with the name of the root variable (\figurename~\ref{fig: function}).

\textit{Reachability:} The reachability task aims to model if two nodes are connected in a graph.
For a G-DAG, the DAG description function has demonstrated the connections between nodes.
The task description for reachability task is: ``\prompt{Can the [Node~$i$] be reached by [Node~$j$]}'' where \prompt{Node~$i$} and \prompt{Node~$j$} are randomly selected from the nodes in G-DAG (\figurename~\ref{fig: 7 tasks}).

Finally, while it is feasible to directly adopt GPT-4 to generate a contextualized description rather than the plain one (see Appendix~\ref{sec-append-descrip-gpt4}), it is challenging to verify the rationale of the problems generated by GPT-4. Thus, we leave it for future work.

\vspace{-0.05in}
\subsection{\method coexists and co-evolves with existing benchmarks}
\label{sec-method-analysis}
\vspace{-0.05in}

\method is complementary to existing benchmarks.
First, tasks with an intrinsic structure benefit significantly from \method since they can modulate complexity and randomness by adjusting the generation process.
Efforts such as CheckList~\citep{ribeiro2020beyond}, data augmentation~\citep{andreas2020good, zhang2022treemix}, and reasoning dataset synthesis~\citep{sinha2019clutrr, zhao2019easytohard, clark2020transformers, tian2021diagnosing, jin2023large} can be easily integrated into \method.
On the contrary, tasks without a well-defined structure may present challenges for \method's implementation.
Second, \method can be enhanced by existing benchmarks to formulate more challenging scenarios.
For instance, the description function $\mathcal{F}$ is all about natural language texts, so it can be easily combined with adversarial attacks~\citep{textbugger, textfooler, zhu2023promptbench} or out-of-distribution prompts~\citep{yang2022glue} to assess the robustness of \llms.

\blue{Note that while this papers focuses on evaluating reasoning tasks, \method is \emph{flexible} to evaluate natural language tasks.
We show an initial study using \method to evaluate sentiment analysis in Appendix~\ref{sec-app-flex} and more work can be done in the future.
Finally, \method guarantees an unbiased and balanced construction of evaluation samples by nature, since one can easily control the generation process, as shown in Appendix~\ref{sec-append-imbalanced}.}




\vspace{-0.1in}
\section{Experiment}
\vspace{-0.1in}


\subsection{Setup}
\vspace{-0.1in}
\textbf{Tasks and complexity level.}
We mainly discuss the constraint used in each task. Test set accuracy might differ as it is generated dynamically. To balance test time and discrepancy, we produce 500 samples for each dataset. To mitigate the impact of randomness on evaluation results, we assess each dataset three times.
We define $4$ complexity levels (D1$\sim$D4) for each task. For tasks that use general DAGs, the number of nodes is set to $\{7,10,15,20\}$ with each node having $\{3, 4, 6, 8\}$ maximum links and $1$ minimum link.
For tasks that use tree-based DAGs, tree depths and widths are $(2, 2), (3, 2), (3, 3), (4,2)$, respectively.
More details of D1$\sim$D4 are presented in Appendix~\ref{sec-append-detailed-settings}.

\textbf{Evaluation metric.}
Our primary evaluation metric is accuracy.
For tasks where answers are numerical, we employ relative precision \citep{burden2015numerical} to determine the correctness of a prediction, i.e., an answer is deemed correct if its relative precision is within a specified threshold, $\sigma$ (e.g., $0.01\%$), in relation to the ground truth value.
Relative precision is calculated as
$|\mathrm{pred} - \mathrm{gt}| / (\mathrm{gt} + \epsilon) \leq \sigma$
where $\mathrm{gt}$ represents the ground truth value, $\mathrm{pred}$ is the model's prediction, $|\cdot|$ is the absolute value function, $\sigma$ is the desired relative precision threshold, and $\epsilon$ is a small value introduced to prevent division by zero.

\textbf{LLMs.} Our evaluated \llms include \tfive \citep{t5}, \phione \citep{phi1.5}, \wizard \citep{luo2023wizardmath}, \xwin \citep{xwin-lm}, \llama \citep{llama}, \vicuna \citep{vicuna}, \chat \citep{chatgpt}, and \gptfour \citep{openai2023gpt4}.
Temperature is set to $0$ to avoid randomness.
We set the generation length to be directly proportional to the input length.
Specifically, for \chat and \gptfour, the generate length is set to be twice the input length; for the remaining models, it is set to be five times the input length. 
We designed prompts for each task, incorporating demonstrations of rules, particularly for reasoning and algorithm tasks.
To ensure formatted output, we further ask \llms to explicitly output their predictions between ``$\langle \langle \langle$'' and ``$\rangle \rangle \rangle$''. 
All implementations are based on Huggingface.

\vspace{-0.15in}

\subsection{Results for Math, Logical reasoning, and Algorithm tasks}
\label{sec-exp-results}

\vspace{-0.1in}

\begin{figure}[t!]
    \centering
    \includegraphics[width=\textwidth]{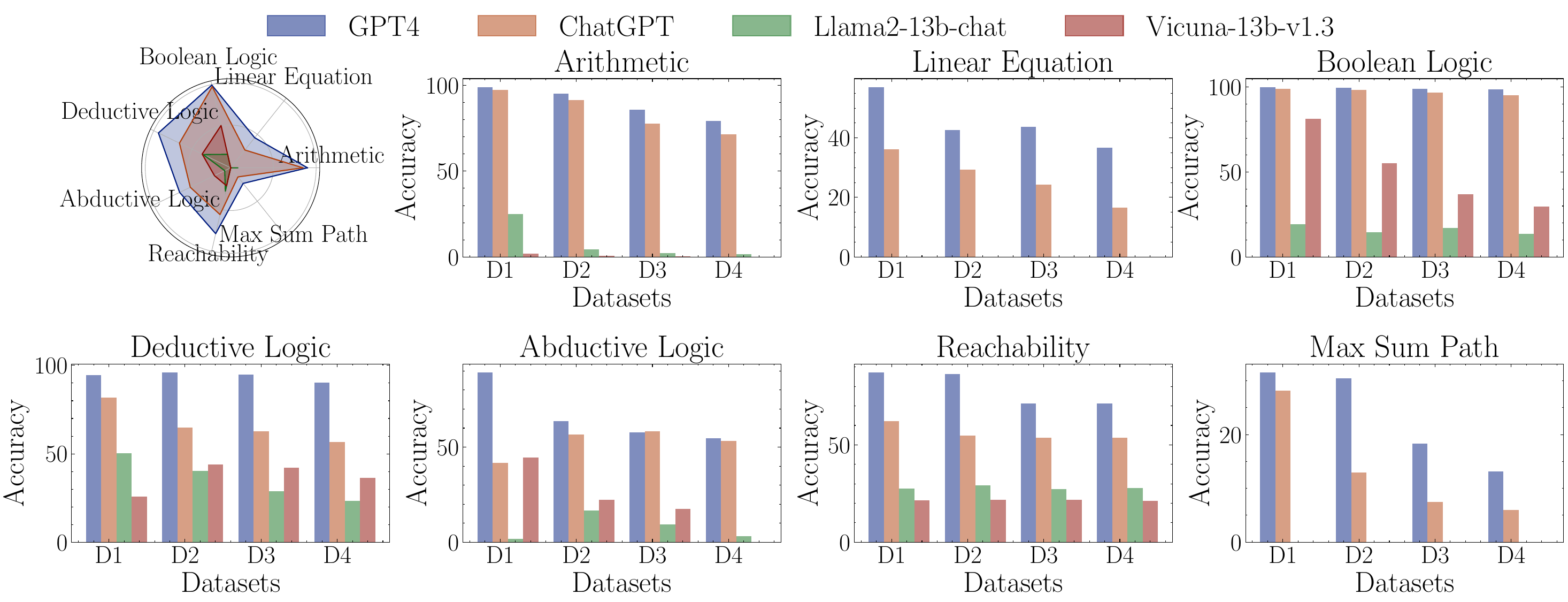}
    \vspace{-.2in}
    \caption{Results on 7 tasks with complexity from D1 to D4 (averaged on 3 description orders and 3 seeds). \xwin, \phione, and \wizard are not shown as their results are all 0.}
    \label{fig: overall}
    \vspace{-.2in}
\end{figure}

Before presenting the main results, note that \textbf{the results of \tfive, \phione, \wizard, and \xwin in all tasks are 0}, so we no longer report them.
We carried out experiments using three random seeds.
\figurename~\ref{fig: overall} shows the results of all tasks averaged in three generation orders and three random seeds (full results in Appendix~\ref{sec-append-exp-results}).
\gptfour performs best, followed closely by \chat. \llama's performance is subpar, with \vicuna occasionally outperforming Llama2-13b-chat.
More findings are as follows.

\textbf{Inconsistent performance between existing static benchmarks and \method:} Despite the excellent results of \phione, \xwin and \wizard on existing benchmarks, their poor performance in our evaluations highlights the potential problems when evaluating \llms solely on static benchmarks and possible low training data quality or data contamination issue.

\textbf{Difficulty with complex datasets:}
Performance mostly decreases sharply from D1 to D4, highlighting \llms' struggles with increasing complexity.
For example, \chat's performance drops by 23\% for arithmetic task as complexity increases. 
Notably, performance in abductive logic (inferring premises from conclusions) is much lower than in deductive logic (deriving conclusions from premises), as supported by \cite{berglund2023reversal}, which shows \llms excel more in ``A is B'' than ``B is A''.
In addition, the performance difference between \gptfour and \chat, while subtle in simpler tasks like D1, becomes prominent in complex tasks.
These observations indicate the value of intricate and evolving tasks to effectively differentiate and evaluate models.
We also present more interesting observations in Appendix~\ref{sec-append-exp-results}.

\begin{wrapfigure}{r}{0.42\textwidth}
\vspace{-.2in}
  \includegraphics[width=0.42\textwidth]{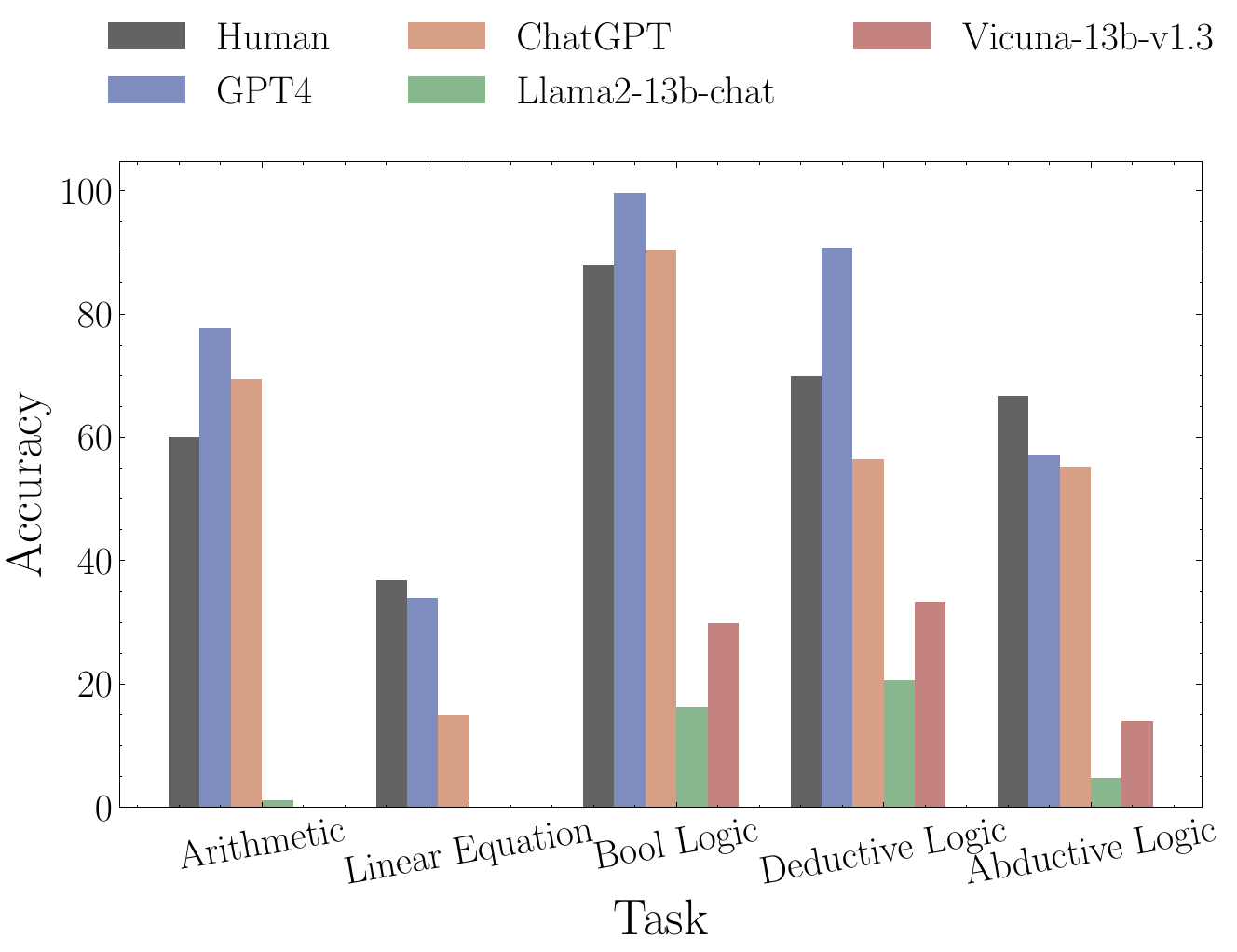}
  \vspace{-.3in}
  \caption{Human vs. \llms results.}
\vspace{-.2in}
\label{fig: human}
\end{wrapfigure}
\textbf{Human study:} We recruited 82 human evaluators with at least a bachelor's degree\footnote{The results may not represent the highest level of human performance. Demographics are in Appendix~\ref{sec-append-human}.}, 
to gauge their skills against \llms on the most complex dataset (D4) for mathematical and logical reasoning tasks.
Each participant tackled 5 problems from each dataset. 
As depicted in \figurename~\ref{fig: human}, both \gptfour and \chat consistently showed high competence in most tasks, surpassing average human results.
The reason could be that the generated problems are generally harder for humans but easier for \llms.
Nevertheless, \gptfour struggled in areas like linear equations and abductive logic. This indicates that future development could involve more data from specific domains. 

\vspace{-0.15in}
\subsection{Case Study}
\vspace{-0.1in}
\label{sec-exp-casestudy}

\begin{wrapfigure}{r}{0.37\textwidth}
\vspace{-.2in}
  \includegraphics[width=.37\textwidth]{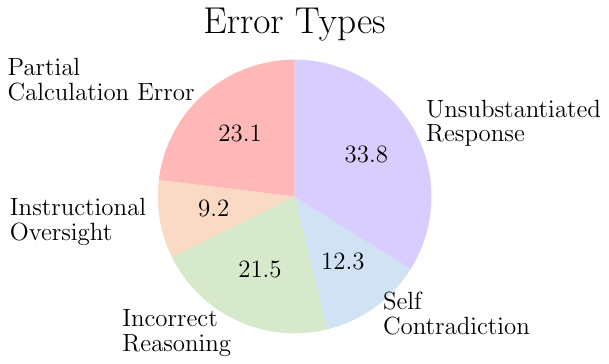}
  \vspace{-.2in}
  \caption{Failure modes distribution.}
\vspace{-.2in}
\label{fig: error pie}
\end{wrapfigure}
In an endeavor to comprehensively understand the behavior of \llms, we meticulously examined the failure modes.
Our focus is especially on the most challenging datasets for arithmetic, deductive logic, abductive logic, and reachability tasks based on the performance of \gptfour.
We randomly selected $20$ failure samples for each task and summarized the failure modes in \figurename~\ref{fig: error pie}. The detailed failure cases are presented in Appendix~\ref{sec-append-case-study}.
The error types vary, indicating that there is much room for improvement.

\textbf{Partial calculation error:} \gptfour occasionally errs in intermediate steps, while keeping the remaining steps correct. We emphasize that the errors may be as simple as $20/7=37.28$. This aligns with \citep{dziri2023faith} noting \llms sometimes give partially correct multi-digit multiplication results.
\textbf{Incorrect reasoning and self contradiction:} In reasoning tasks, \gptfour may misinterpret rules. Given an abductive logic $A \lor B \rightarrow C$ with $C$ is False, the premise $A,B$ must be False. However, \gptfour inaccurately abduced that either A or B \textit{might} be False. Further, \gptfour occasionally contradicts itself in its assumptions for the same inference in abductive logic task.
\textbf{Unsubstantiated response:} In reasoning tasks and algorithm tasks, \gptfour often answers without any inferences or justifications. Its answer-only responses suggest possible memorization or shallow understanding.
\textbf{Instructional oversight:} Occasionally, \gptfour adeptly arrives at the correct computation but stumbles when it comes to adhering to the output instructions laid out in prompts, for example, the required relative precision of mathematic calculation.


\vspace{-0.1in}
\subsection{Ablation Study}
\vspace{-0.1in}
\label{sec-exp-ablation}


\begin{figure}[t!]
    \centering
    \includegraphics[width=\textwidth]{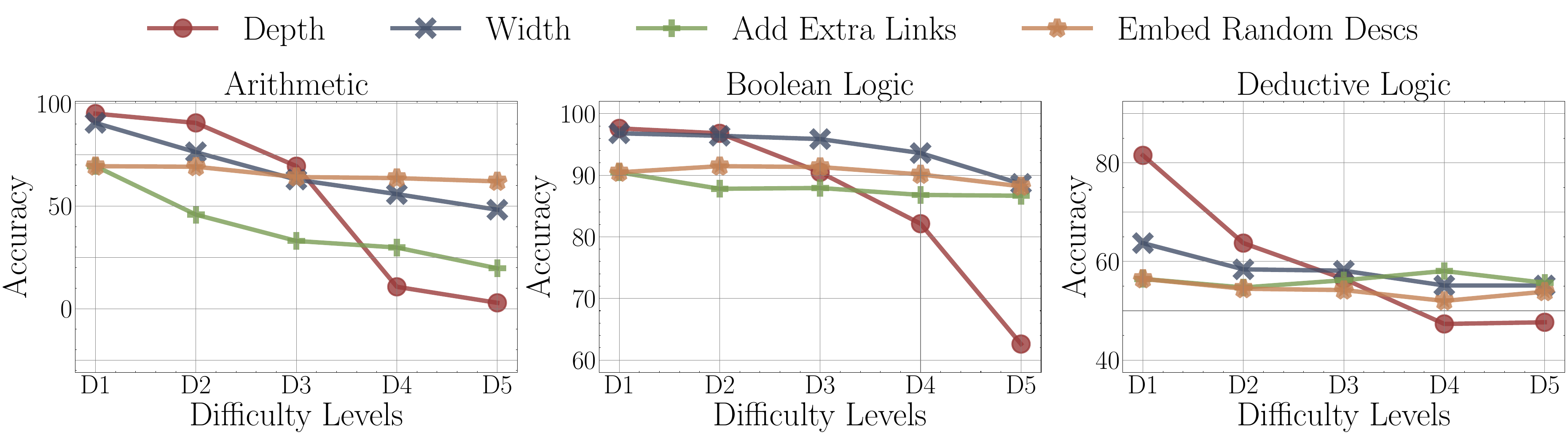}
    \vspace{-0.2in}
    \caption{Comparison results across different complexity constraints.}
    \label{fig: complexity}
    \vspace{-0.1in}
\end{figure}

\textbf{Impact of complexity constraints $\mathcal{C}_{\mathcal{G}}$:}
In \figurename~\ref{fig: complexity}, we vary complexity in \chat by adjusting constraints as described in Sec. \ref{sec-method-constraint} and observe how \llms performance shifts across arithmetic, boolean logic, and deductive logic tasks. Notably, as task intricacy rises due to augmented complexity parameters, \llms' performance diminishes. Depth emerges as the predominant challenge in tree-based DAGs, emphasizing the \llms' difficulty with extended inference steps.

\textbf{Prompt engineering:}
We evaluate five prompting techniques (PE) on our most challenging datasets, as outlined in \tablename~\ref{tb-prompting} and Appendix~\ref{sec-append-exp-prompt}.
No PE methods can perform best in all tasks.
While APE notably boosts the Linear Equation task by $10$\%, it negatively impacts deductive and abductive logic.
These varied outcomes highlight the importance of task-specific PE selection and development.

\textbf{Influence of model size:}
We further evaluate the performance of Llama2 with different model sizes of arithmetic, boolean logic and reachability tasks on their simplest dataset D1.
\tablename~\ref{tb-different-size} shows that larger sizes produce better results, but mostly still not surpass \gptfour and human.



\vspace{-.1in}
\section{\method Helps Fine-tuning}
\label{sec-finetune}
\vspace{-.1in}

\begin{wrapfigure}{r}{0.4\textwidth}
\vspace{-.2in}
  \includegraphics[width=0.4\textwidth]{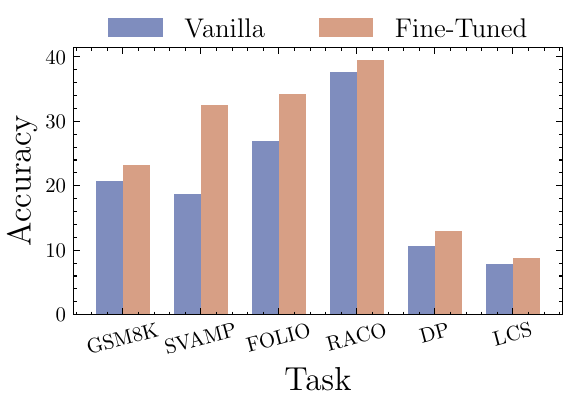}
  \vspace{-.3in}
  \caption{Results on existing benchmarks using \llama model fine-tuned on \method-generated data.}
\vspace{-.1in}
\label{fig: fine-tune results existing benchmark}
\end{wrapfigure}

In this section, we show that \method-generated data can further be utilized to fine-tune \llms to improve their capabilities of solving complex tasks.
Specifically, we generate training data for $7$ tasks to fine-tune \llama. The details of fine-tuning and training sample generation are in Appendix~\ref{append-finetune}.
We then test the model with different settings: (1) \emph{in-distribution} samples with the same difficulty as the training data; (2) \emph{out-of-distribution} samples, whose difficulty levels are higher than the training data.
To further demonstrate the effectiveness of our generated data, we test the models with few-shot examples on \textbf{existing benchmarks} including GSM8K \citep{cobbe2021training} and SVAMP \citep{patel2021nlp} to evaluate math abilities, FOLIO \citep{han2022folio} and RACO \citep{srivastava2023beyond} to evaluate the logical reasoning abilities, and DP \citep{dziri2023faith} and LCS \citep{srivastava2023beyond} to evaluate the algorithm abilities.
Results in \figurename~\ref{fig: fine-tune results existing benchmark} and \ref{fig: finetune results our tasks} show that the performance of the fine-tuned model increases in all tasks. It shows that \method is effective not only as a benchmark but also in enhancing the performance of \llms on existing benchmarks via fine-tuning on its generated samples.
The improvement might stem from the similarities between various benchmarks and \method-generated samples.
For instance, GSM8K samples can be interpreted as trees of depth $2$ or $3$.
Interestingly, even no dynamic programming tasks in our fine-tuning, the fine-tuned model also showed improved performance on the DP and LCS datasets.
This underscores the potential learning capability of \llms and the efficacy of training samples generated by \method.
{\blue{We further fine-tuned \chat and examined its ability on general natural language understanding. The results indicated that fine-tuning on our generated datasets does not necessarily hurt the natural language understanding ability, as comprehensively discussed in Appendix~\ref{append-finetune-general}.}}


\vspace{-.1in}
\section{Conclusion and Discussion}
\vspace{-.1in}
We proposed \method, a dynamic \llms evaluation protocol to mitigate the data contamination and static complexity of existing benchmarks.
We designed the graph-informed \method for reasoning tasks.
The strength of \method lies in its dynamic generation of samples, with inherent flexibility for difficulty adjustment.
We observed several interesting findings in experiments using our benchmark.
More importantly, \method-generated samples can not only be used as evaluation samples, but also act as fine-tuning data for \llms to enhance their performance in existing benchmarks.

Our work has several limitations.
(1) Tasks: We currently focused on reasoning tasks.
While \method supports other tasks (see Sec.~\ref{sec-app-flex}), it requires design of the generation algorithm $\mathcal{G}$.
(2) Samples: Our experiments utilized a limited set of test samples due to resource constraints.
Evaluations on larger sets may help to observe more findings. 
(3) Fine-tuning: Fine-tuning can be done on more diverse models and datasets to gain deeper insights.

\section*{Acknowledgement and Disclaimer}

The purpose of this research is to present a dynamic and evolving evaluation protocol in response to the rapid development of \llms.
We have the following claims.
First, the generation mechanism of \method does not contain any potentially harmful words or expressions but only mathematical, logical, and algorithmic descriptions.
In the future, the usage of \method on other natural language tasks should be dealt with cautions to not include any harmful or irresponsible languages.
Second, human subjects are involved in this study to act as \llms' competitors for performance comparison and analysis.
All human studies are conducted obeying laws and regulations in certain countries.
Third, the experiments on \chat and \gptfour conducted in this paper are based on their latest version in June, 2023.
Authors recommend using the same version of these services for reproducibility.
As we tried our best to tune the best prompts for our experiments, it is, however, well-known that \llms are highly sensitive to prompts.
Therefore, the experiments in this paper are only based on our prompt design and codebase.
Finally, we may have concluded that some \llms in this paper achieved poor performance in our benchmark, but this does not mean these models are not good or cannot be used in practice.
Authors remain positive and optimistic to all evaluated \llms that they will further be stronger.

\bibliography{refs}
\bibliographystyle{iclr2024_conference}

\newpage
\appendix
\etocdepthtag.toc{appendix}
\etocsettagdepth{chapter}{none}
\etocsettagdepth{appendix}{subsection}
\tableofcontents

\section{Preliminary on Directed Acyclic Graph}
\label{append-graph}
Directed Acyclic Graphs, commonly referred to as DAGs, are a category of graphs that encapsulate a unique structure: they are directed and contain no cycles. In a DAG, vertices are connected by directed links, and there exists no sequence of links that loops back to an original node. Every link in a DAG has an initial node and a terminal node, giving it a direction. This is often symbolized as $a \rightarrow b$, where a is the starting node and b is the ending node. A key property that differentiates DAGs from other directed graphs is their lack of cycles. In other words, starting from any node in the graph, one cannot traverse a series of links and return to the same node.

In our implementation, each \textit{node} comprises three attributes: 
1) \textit{Children (Links)}: These are the direct dependents or subsequent nodes that a given node connects to. They highlight the immediate relations or following a particular node.
2) \textit{Value}:  Every node possesses a value, which can either be explicitly assigned or derived based on its operation and its children. This value captures the essence or result of the represented subproblem.
3) \textit{Operation}: Especially relevant in tree-based DAGs, the operation dictates how a node interprets or processes the values of its children to compute its own value. Operations might include mathematical functions, logical evaluations.

\section{Details of \method}
\label{sec-append-detail}

\subsection{Generation Algorithm}
\label{sec-append-graph-generation-algo}

We distinguish DAGs into two primary categories: Tree-based DAGs (T-DAGs) and General DAGs (G-DAGs), as shown in \figurename~\ref{fig: pipeline}.

\subsubsection{T-DAGs}
Tree-based DAGs possess an innate hierarchical structure that frequently encapsulate tasks that entail a sequence of conditions culminating in a definitive conclusion or result.
This hierarchy naturally aligns with the structure of many mathematical and logical problems. For
instance, in solving a multi-step algebraic problem, one often starts with the provided equations (leaf
nodes) and proceeds step-by-step, combining and reducing these equations until arriving at the final
solution (root node). Such a natural progression of deduction makes tree-based DAGs particularly
feasible for these problems.

We employ a top-down approach to construct a Tree-based DAG.
This algorithm is tailored to produce a tree with a designated depth and width. The inherent randomness stems from two main factors: the operations assigned to intermediate nodes and the initialization values of the leaf nodes. For the intermediate nodes, we commence by randomly selecting an operation that defines the relationship between the node and its children. Take an arithmetic task as an example: selecting `addition ($+$)' implies that the node’s value is the sum of its children’s values.
Once all child nodes are established, we compute the parent node’s value accordingly. In the case
of leaf nodes, values are assigned randomly, such as picking an integer from the range $[1, 10]$ for
arithmetic tasks.

\subsubsection{G-DAGs}
General DAGs, diverging from tree-based ones, lack a strict hierarchy. 
Instead,
they present a more intricate web of node relationships. Their strength lies in simulating complex, intertwined relations in real-world situations. A classic use-case is the representation of transportation systems where nodes symbolize cities and edges represent connecting roads. Challenges such as determining if one city is accessible from another encapsulate the real-world problems general DAGs adeptly model. Their flexibility extends to representing a plethora of situations, from mapping supply-chain logistics to analyzing social networks.

To create a general DAG, we initiate by generating isolated nodes without any connecting links.
Subsequently, each node is endowed with a random value. For every node, the number of children is determined randomly, the maximum number of children is depended on the input. We then establish the links by selecting target child nodes at random.

\subsection{Complexity control}
\label{sec-append-complexity}
\figurename~\ref{fig: complexity constraint demonstration} demonstrated $4$ types of complexity constraints for T-DAGs. Compared to original case, adding width and additional links augments the computational intricacy of each subproblem. Increasing the depth escalates the complexity by necessitating more inference steps. Embedding random descriptions aims to distract \llms.

\begin{figure}[htbp]
    \centering
    \includegraphics[width=\textwidth]{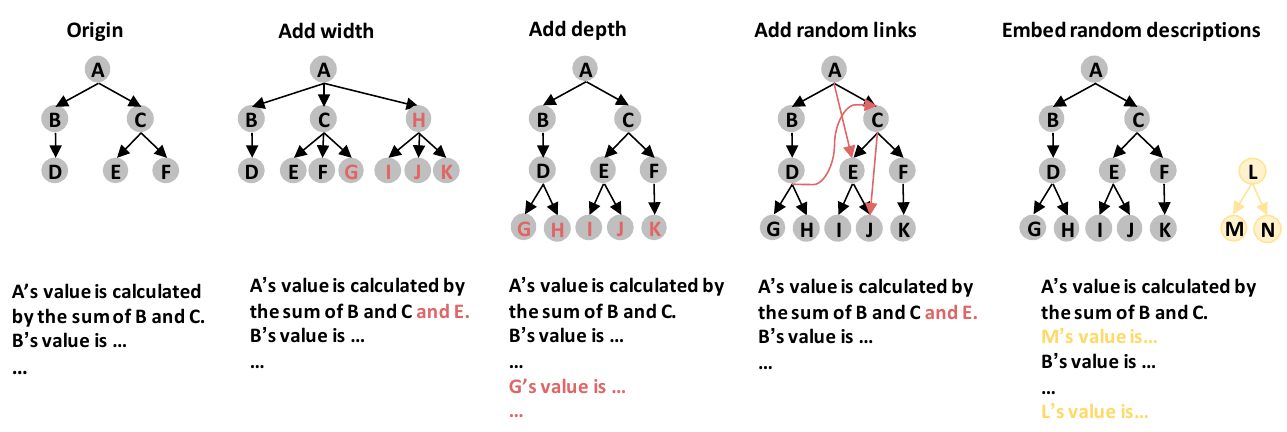}
    \caption{The complexity constraints for Tree-based DAGs.}
    \label{fig: complexity constraint demonstration}
\end{figure}

\subsection{Description function}
\label{sec-append-description}

\figurename~\ref{fig: 7 tasks} presented an illustration of our generated $7$ tasks in $3$ subjects: (1) Mathematics (\method-M), which includes arithmetic task and linera equation task; (2) Logical Reasoning (\method-L), which includes boolean logic task, deductive logic task, and abductive logic task; (3) Algorithm Tasks (\method-A), which includes reachability task and max sum path task. 

\subsubsection{\method-M}
For \method-M, we design mathematical problems that can be categorized into two main types:

\paragraph{Arithmetic:} Given a T-DAG, the DAG description function has already demonstrated the premise: the leaf nodes and the intermediate steps of inference: non-leaf nodes. Next, we select the root node as the  the variable required to solve, we append the question ``\prompt{What is the value of [Root]?}'' to the final of the description where \prompt{[Root]} is filled with the name of the root variable. 
\begin{framed}
    \prompt{
    Here is a description of an arithmetic problem:
    
    The value of aaa is 9.
    
    The value of aad is 4.
    
    aae gets its value by taking the square root of the value that aad has.
    
    The value of aab is 3.
    
    aac gets its value by adding together the value of aaa and aab.
    
    aaf gets its value by subtracting the value of aae from the value of aac.
    
    Compute the result of aaf. If the solution cannot be calculated, answer 'N/A'. Ensure your result is within a relative precision of 0.0001 (or 0.01\%) compared to the ground truth value. Ensure your final result begins with '<<<' and ends with '>>>', for example, if the answer is 1, your final result should be <<<1>>>.
    }
\end{framed}

    \paragraph{Linear Equations:} Linear equations with multiple variables present a higher degree of complexity compared to arithmetic.
    We use two-variable linear equations described as \prompt{$a_1 x + b_1 y = c_1, a_2 x + b_2 y = c_2$}. The coefficients are assigned a random value. We ask \llms to solve the value of $x, y$ for this linear system. Note that constructing such linear equations does not need T-DAGs or G-DAGs.
    To introduce additional challenges, some coefficients can be substituted with values derived from the T-DAG's roots, forcing a two-step problem-solving approach: first calculating the coefficients from the DAG and subsequently resolving the linear equations. Note that in our experiment, the tree depth and width for linear equation task are $(1, 1), (2, 2), (3, 2), (4,2)$ respectively. $(1, 1)$ represent that the value of the replaced coefficient is directly given.
    \begin{framed}
    \prompt{
    Given the following linear equation system with two variables:

    -7 x + aac0 y = 1

    8 x + -1 y = 10

    The calculation of aac0 is defined as:
    
    The value of aab0 is 4.
    
    The value of aaa0 is 9.
    
    aac0 gets its value by adding together the value of aaa0 and aab0.
    
    Determine the values of x and y. Ensure your results are within a relative precision of 0.001 (or 0.1\%) compared to the ground truth values. Your response should be formatted as: <<<x's value y's value>>>, e.g., if x=1 and y=2, then it should be <<<1 2>>>
    }
\end{framed}

\subsubsection{\method-L}
\method-L also shares a natural compatibility with the structured representation of T-DAGs due to the innate progression and dependencies inherent in logical constructs.
The tasks are:

    \paragraph{Boolean Logic:} Similar to arithmetic task, it primarily revolves around the manipulation and combination of $\mathrm{True}$ and $\mathrm{False}$ values using operators: $\mathrm{AND, OR, NOT}$. The problems are presented as: \prompt{What is the truth value of [Root]?}.
    \begin{framed}
    \prompt{
    Here is a description of a boolean logic problem:
    
    aaa is True.
    
    The value of aab equals to (NOT aaa).
    
    The value of aac equals to (NOT aab).
    
    Compute the result of aac. If the solution can not be calculated, answer 'N/A'. Ensure your final result begins with '<<<' and ends with '>>>', for example, if the answer is True, your final result should be <<<True>>>.
    }
\end{framed}

\paragraph{Deductive Logic:} The process of deductive logic is similar to boolean logic, but deduction introduces a bit complexity compared to boolean logic inference. For instance, given premises $A$ (True) and $B$ (False), and the relationship $(A \land B) \rightarrow C$, the value of conclusion $C$ remains undetermined because the conjunction $(A \land B)$ is false. Given the description of T-DAGs, the problem is formulated as \prompt{By the rule of deduction, what is the value of [Root]?} 
    \begin{framed}
    \prompt{
    Here is a description of a deductive logic problem:
    
    aab is True.
    
    aaa is True.
    
    (aaa and aab) -> aac.
    
    aad is False.
    
    (NOT aad) -> aae.
    
    (aac or aae) -> aaf.
    
    The symbol '->' represents a deductive relationship, e.g., A -> B implies that if A is true, then B is true. If A is false, B's truth value remains undetermined (N/A). Deduct the result of aaf. If the solution can not be abducted, answer 'N/A'. Ensure your final result begins with '<<<' and ends with '>>>', for example, if the answer is True, your final result should be <<<True>>>.
    }
\end{framed}

\paragraph{Abductive Logic:} It aims to hypothesize the most likely cause or explanation based on observed outcomes. When working with a T-DAG, we assign a random value to the root node. Then, we randomly select a leaf node, the problem is to determine the leaf node' value based on the given the DAG structure and root's value. The task description is \prompt{Given the value of [Root] is [value], what is the value of [Node]?}
\begin{framed}
\prompt{
Here is a description of an abductive logic problem:

(aaa or aab) -> aac.

(NOT aac) -> aad.

Given aad is False, what is the value of aab?

The symbol '->' represents a deductive relationship, e.g., A -> B implies that if B is false, then A is false. If B is true, A's truth value remains undetermined (N/A). If the solution can not be deducted, answer 'N/A'. Ensure your final result begins with '<<<' and ends with '>>>', for example, if the answer is True, your final result should be <<<True>>>.}
\end{framed}

\subsubsection{\method-A}
\method-A tasks is suitable for D-DAG since they aim to model the real-world applications.
Among many problems that can be abstracted and modeled as a G-DAG, here we select two representative tasks.

\paragraph{Reachability:} A classic example of where G-DAGs shine is in modeling problems like the reachability of two nodes in the DAG.
Given various nodes representing cities and links indicating roads between them, the question models can help deduce the if there exists a route from one city to another. Thus, the description for this task is: ``Can the [Node1] be reached by [Node2]'' where Node1 and Node2 are randomly selected from the nodes in G-DAG.
\begin{framed}
\prompt{
Given a directed graph:

aai points to: (None).

aac points to: (aai).

aaj points to: (aai).

aah points to: (aai, aac, aaj).

aag points to: (aac).

aaf points to: (aag, aah, aaj).

aab points to: (aaf, aah).

aaa points to: (aag, aah, aaf, aaj).

aae points to: (aai, aac, aaa).

aad points to: (aab, aaf, aae).

Can aaf be reached starting from aag?

Respond with either '<<<True>>>' if reachable, or '<<<False>>>' otherwise.
}
\end{framed}

\paragraph{Max sum path:} Compared to reachiability problem, max sum path is more complex. This problem assign a value for each city, and then requires to find a path from two cities that the sum of the values the path go through is maximum. It requires the \llms to find all the path between two nodes, and then determine the path with maximum value. The description for this task is \prompt{What is the max sum path from [Node 1] to [Node 2]?}
\begin{framed}
\prompt{
Given a directed graph with values assigned to each node:

aaj points to: (None).

aah points to: (None).

aai points to: (aah).

aag points to: (aai).

aac points to: (aag).

aab points to: (aac, aag).

aaf points to: (aai).

aae points to: (aac, aah).

aad points to: (aag, aae, aaj).

aaa points to: (aae, aai, aaj, aad).

The value of aaj is 9

The value of aab is 8

The value of aah is 3

The value of aaf is 3

The value of aai is 3

The value of aae is 3

The value of aad is 6

The value of aac is 4

The value of aag is 8

The value of aaa is 4

What's the maximum sum path from aaa to aae?
For exmaple, the value of the path A->B->C is obtained by summing the values of nodes A, B, and C. Please format your response as <<<Answer>>>. For example, if the answer is 1, it should be presented as <<<1>>>.
}
\end{framed}

\begin{figure}[t!]
    \centering
    \begin{minipage}[b]{0.55\textwidth}
        \centering
        \includegraphics[width=\textwidth, frame]{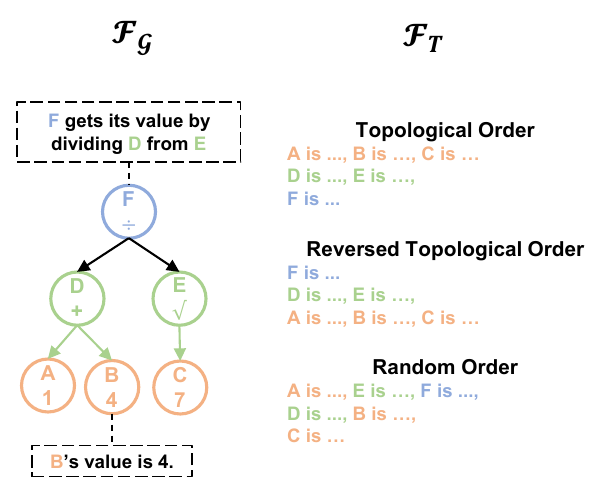}
        \caption{Description function of arithmetic task.}
        \label{fig: function}
        \vspace{-.11in}
    \end{minipage}
    \hfill
    \begin{minipage}[b]{0.44\textwidth}
        \centering
        \includegraphics[width=\textwidth]{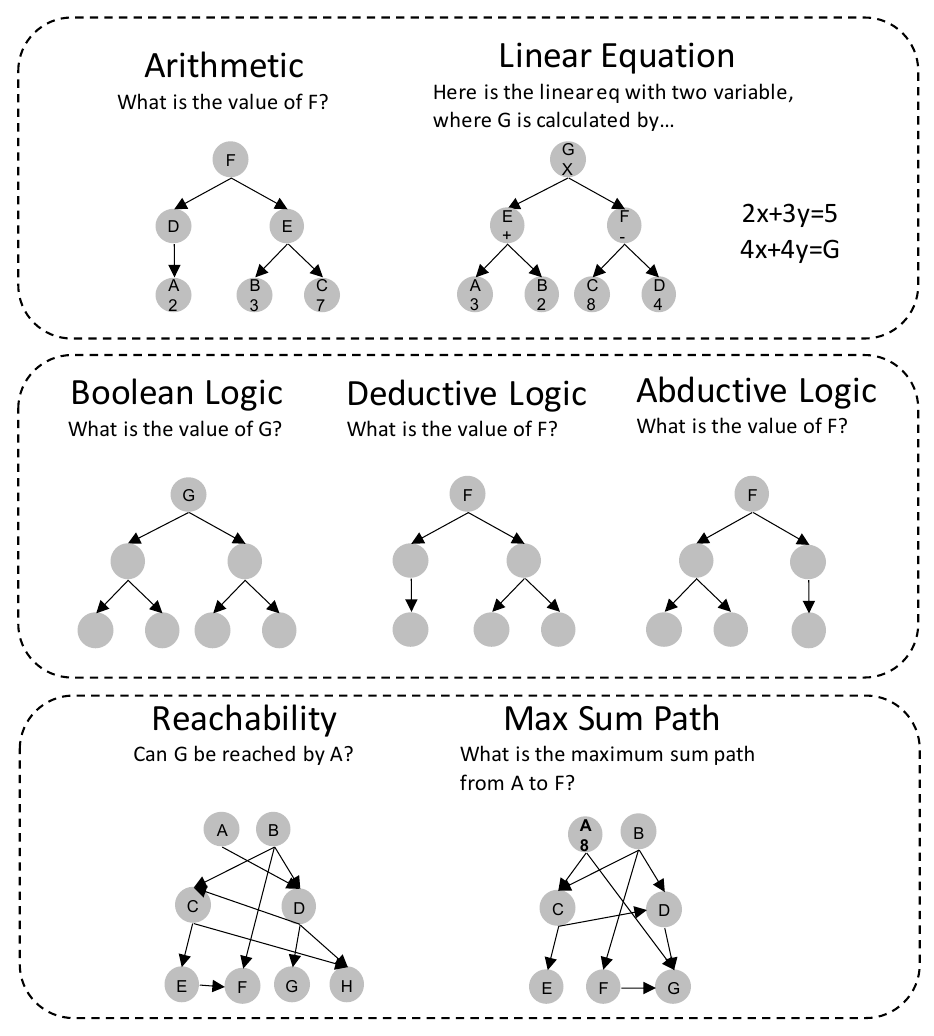}
        \vspace{-.1in}
        \caption{Descriptions of $7$ tasks.}
        \label{fig: 7 tasks}
        \vspace{-.1in}
    \end{minipage}
    \vspace{-.2in}
\end{figure}

\subsection{Description order}
\label{sec-append-description-order}

\begin{itemize}[leftmargin=2em]
\setlength\itemsep{0em}
\item \textbf{Topological Order:} This approach sequences the description of nodes in a manner where every node is introduced after all its descendent nodes. Such a sequence ensures that leaf nodes are laid out prior to any operation that utilizes them (e.g., an addition or a logical AND). 

\item \textbf{Reversed Topological Order:} Taking an almost counter-intuitive approach, this order starts by spotlighting the culminating nodes or outcomes. Once these results are laid bare, the narrative retraces its steps, navigating backwards to the root nodes or primary inputs. 

\item \textbf{Random Order:} This unstructured method presents nodes in a random sequence, irrespective of their dependencies within the DAG. Such a disordered narrative challenges \llms to independently connect the dots and derive patterns. 
\end{itemize}

\subsection{Potentials in Using \gptfour to Generate Description Functions}
\label{sec-append-descrip-gpt4}

In this section, we explore the potentials in using \gptfour to generate more natural and contextualized problem descriptions rather than the plain ones in Section~\ref{sec-append-description}.

We simply feed the following instruction to \gptfour:

``\prompt{Here is a description of an arithmetic problem, please generate an application math test using the following information. Do not change the exact numbers. Note that the test should be contextualized, real applications test, instead of the plain description.}'' followed by the \method-generated problems such as the one in \method-M.

Then, \gptfour will generate the following application problem which is more vivid:

\begin{framed}
    \prompt{Problem:

Imagine you are managing a small warehouse. The warehouse has a total storage capacity of 9 large crates (represented by 'aaa'). You also have an additional area that can store 3 more crates (represented by 'aab').

Recently, you purchased a set of 4 special crates (represented by 'aad') that are stackable. When stacked, these crates take up the space of 2 regular crates.

Your task is to calculate the new total storage capacity of your warehouse after adding the stackable crates and utilizing the additional storage area.

Steps:

Identify the total initial capacity (aaa).
Identify the additional capacity (aab).
Calculate the effective space taken by the stackable crates (aae) - consider them as a single unit after stacking.
Calculate the new total capacity by adding the initial and additional capacities (aac).
Finally, adjust the total capacity by considering the space taken by the stackable crates (aaf).
Question:

What is the new total storage capacity of the warehouse after these adjustments?

(Use the given values and operations to solve the problem)}
\end{framed}

While the above application problem looks more vivid than the plain one, it is in fact challenging to verify its rationale and correctness.
For instance, is the problem natural? Is the context correct?
Note that \gptfour tends to first compute the answers to the original problem before generating a new one, which may accumulate errors including problem understanding, application generation, and computation.
Therefore, while we point the feasibility of such practice, this in current stage is difficult to verify and should be left for future work.

\section{Proof}
\label{sec-append-proof}

\begin{theorem}
\label{theorem: tree-based-appendix}
Given a tree-based DAG with depth $d$ and width $w$, if the operation set for non-leaf nodes has $k$ distinct operations and the value set for leaf nodes contains $n$ distinct values, the probability that two independently generated DAGs are identical is: 
$ P = \left(k^{\frac{w^{d-1}-1}{w-1}} \times n^{w^{d-1}} \right)^{-1}. $
\end{theorem}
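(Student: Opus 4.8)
The plan is to reduce the question to a counting argument. Since the two DAGs share the same prescribed structure — a complete $w$-ary tree of depth $d$ — the only sources of randomness are the operations placed at the non-leaf nodes and the values assigned to the leaf nodes, and by construction these are drawn independently and uniformly. Hence two independently generated instances coincide precisely when every non-leaf node receives the same operation in both and every leaf node receives the same value in both, and the probability of this event factorizes over the nodes.

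First I would count the nodes level by level. In a complete $w$-ary tree of depth $d$, level $i$ (with the root at level $1$) contains $w^{i-1}$ nodes. The leaves occupy level $d$, so there are $w^{d-1}$ of them, each carrying one of $n$ equally likely values. The non-leaf nodes occupy levels $1$ through $d-1$, so their number is $\sum_{i=1}^{d-1} w^{i-1} = \frac{w^{d-1}-1}{w-1}$ (the degenerate case $w=1$ being recovered as the limit $d-1$), each carrying one of $k$ equally likely operations.

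Next I would assemble the probability. Fixing the realization of the first DAG, the second DAG matches it at each non-leaf node with probability $1/k$ and at each leaf node with probability $1/n$; mutual independence of the per-node draws (within a single generation and between the two generations) lets me multiply these, giving
\[
P = \left(\frac{1}{k}\right)^{\frac{w^{d-1}-1}{w-1}} \left(\frac{1}{n}\right)^{w^{d-1}} = \left(k^{\frac{w^{d-1}-1}{w-1}} \times n^{w^{d-1}}\right)^{-1},
\]
which is the claimed formula; equivalently, it is one over the total number of distinct position-labeled trees of this shape.

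The argument is essentially bookkeeping, so the only thing demanding care — and the closest thing to an obstacle — is making the modeling assumptions explicit: that the structural parameters $d$ and $w$ are held fixed (so no contribution arises from varying shape), that node identities are pinned down by position (so ``identical'' means literal equality, not equality up to relabeling), and that the per-node draws are mutually independent and uniform. Once these are stated, the geometric-series count and the factorization close the proof. I would also note, echoing the theorem statement, that layering in the further mechanisms (extra random links, embedded distractor descriptions) only multiplies in additional independent factors and therefore strictly decreases $P$, so the base case already gives a valid upper bound on the collision probability.
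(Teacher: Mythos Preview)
Your proposal is correct and follows essentially the same approach as the paper: count nodes per level in the complete $w$-ary tree, use independence and uniformity to factor the collision probability over nodes, and sum the geometric series for the non-leaf count. Your version is, if anything, more explicit about the modeling assumptions (position-labeled identity, uniform independent draws) than the paper's proof, which simply multiplies the per-level probabilities $p_i = k^{-w^{i-1}}$ and $p_d = n^{-w^{d-1}}$ and appends a remark that trees differing only in operand order are still counted as distinct.
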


\begin{proof}
To determine the overall probability, we analyze the likelihood at each depth and then multiply these probabilities. For depth $i$, the number of nodes is $w^{i-1}$.

For depth $i, 1 \leq i \leq d-1$. Since these nodes are non-leaf nodes, the probability they are identical in two independently generated DAGs is the likelihood all of them have the same operations:
$ p_{i} = \frac{1}{k^{w^{i-1}}}. $

For the leaf nodes at depth $d$, the probability that they are the same across two DAGs is:
$ p_{d} = \frac{1}{n^{w^{d-1}}}. $

Thus, the overall probability $ P $ that two DAGs are identical is:
$ P = \prod_{i=1}^{d-1} p_i \times p_d. $

Substituting the above expressions and simplifying gives the result:
$ P = \left(k^{\frac{w^{d-1}-1}{w-1}} \times n^{w^{d-1}} \right)^{-1}. $

Note: We consider two trees to be distinct even if they only differ in the order of operations. For instance, the tree representing $3 \times 5$ is considered different from the tree representing $5 \times 3$. Excluding such cases may be non-trivial and is unlikely to significantly affect the odds.

\end{proof}

\begin{theorem}
\label{theorem: general-appendix}
Given a general DAG with $n$ nodes where each node has a minimum of $l \geq 1$ links, the lower bound of the probability that two randomly selected DAGs of this configuration are identical is $\left((n-1)!\right)^{-1}$.
\end{theorem}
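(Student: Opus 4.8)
\textbf{Proof proposal for Theorem~\ref{theorem: general-appendix}.}

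The plan is to lower-bound the collision probability by ignoring the full combinatorial structure of a general DAG and focusing on one clean necessary condition that must hold if two such DAGs coincide: their underlying \emph{topological orderings} of the $n$ nodes must agree. Concretely, I would fix a canonical labeling of the $n$ nodes, and observe that in the generation process of a G-DAG the links are drawn at random after the nodes are instantiated; whatever the precise distribution over link sets is, once we condition on the resulting acyclic structure, the nodes admit at least one topological order, and any two generated graphs that are identical must in particular induce the same order relation on that common labeling. So the first step is to argue $\Pr[\text{two G-DAGs identical}] \geq \Pr[\text{a particular fixed DAG is generated}]^2$ is \emph{not} the route; instead I would argue directly that the probability two independent draws are identical is at least the probability that both draws realize one specific, fixed target graph $G_0$ that is guaranteed to be in the support --- but that gives $(\text{something})^2$, which is too small.

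So the cleaner approach, and the one I would actually carry out, is: pick the event that controls the count cheaply. With $n$ nodes and each node having at least $l \geq 1$ outgoing links, the number of admissible link-target configurations is at most $(n-1)!$ in the following sense --- order the nodes by a topological sort; node in position $j$ (counting from the sink) can only point to nodes already placed, so there are at most $\binom{j-1}{\geq 1}$-many choices, and multiplying a crude bound of $j-1$ across positions $j = 1,\dots,n$ gives at most $(n-1)!$ distinct link patterns \emph{relative to a fixed order}. Hence, conditioning on both independent samples producing the same topological order (whose probability I bound from below by the reciprocal of the number of orders, or absorb into the constant), the conditional collision probability is at least $1/(n-1)!$, because among at most $(n-1)!$ equally-or-unequally-likely outcomes the collision probability of two i.i.d.\ draws is at least $1/(\#\text{outcomes})$ by Cauchy--Schwarz ($\sum p_i^2 \geq (\sum p_i)^2 / m = 1/m$). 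Assembling these pieces yields the stated bound $\bigl((n-1)!\bigr)^{-1}$.

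The key steps in order: (1) fix the node set and a reference enumeration; (2) show that an identical pair of G-DAGs forces identical link structure, and bound the total number of distinct link structures (over all valid orderings) by $(n-1)!$ --- this is the combinatorial heart, counting out-neighborhoods layer by layer in a topological order and using that node $j$ has fewer than $j$ candidates; (3) invoke the elementary inequality that for any probability vector $(p_1,\dots,p_m)$ one has $\sum_i p_i^2 \geq 1/m$, so two independent samples collide with probability at least $1/m$ with $m \leq (n-1)!$; (4) conclude. I would present step (2) carefully, since the phrase ``each node has a minimum of $l \geq 1$ links'' only tightens the count (fewer valid configurations when links are forced), so the bound $(n-1)!$ on the number of configurations --- hence $1/(n-1)!$ lower bound on collisions --- is robust to the exact value of $l$.

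The main obstacle I anticipate is making step (2) rigorous without overcounting or undercounting: a general DAG does not come with a canonical topological order, so I must be careful that ``the number of distinct DAGs on $n$ labeled nodes'' is being bounded, not ``the number of (DAG, order) pairs,'' and that the generation process's actual distribution over these DAGs is irrelevant because the Cauchy--Schwarz bound $\sum p_i^2 \geq 1/m$ holds for \emph{any} distribution supported on $m$ outcomes. A secondary subtlety is whether the minimum-degree constraint $l\ge 1$ could ever make the support smaller than claimed in a way that breaks the argument --- it cannot, since shrinking the support only increases $\sum p_i^2$, so the bound stays valid; I would note this explicitly to preempt the concern. The remaining arithmetic (that the layer-by-layer product of at-most-$(j-1)$ choices telescopes to at most $(n-1)!$) is routine and I would state it in one line.
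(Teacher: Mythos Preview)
Your proposal has a genuine gap at its combinatorial heart, step (2). You claim that the number of admissible link structures on $n$ labeled nodes is at most $(n-1)!$, arguing that the node in topological position $j$ has ``at most $\binom{j-1}{\ge 1}$-many choices'' for its out-neighborhood and that this is ``crudely bounded by $j-1$.'' But the bound points the wrong way: the number of nonempty subsets of $j-1$ candidates is $2^{j-1}-1$, which exceeds $j-1$ for every $j\ge 3$. The quantity $j-1$ is the count when each node has \emph{exactly one} out-link, not an upper bound for ``at least one.'' Consequently the total number of labeled DAGs on $n$ vertices is far larger than $(n-1)!$ --- already for $n=4$ there are $543$ labeled DAGs while $3!=6$. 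With the support size $m$ much larger than $(n-1)!$, your Cauchy--Schwarz step $\sum_i p_i^2 \ge 1/m$ delivers only a lower bound much smaller than $1/(n-1)!$, and the desired conclusion does not follow. Your closing remark that the constraint $l\ge 1$ ``only tightens the count'' does not rescue this, because the baseline count you need (the $l=1$ case, which is the largest family) is already too big.

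The paper's argument is quite different and does not attempt to upper-bound the number of all DAGs. It restricts attention to the concrete subfamily in which every node carries exactly one outgoing link and the chosen targets are distinct, counts that family directly as $(n-1)(n-2)\cdots 1=(n-1)!$, reads off the collision probability $1/(n-1)!$ under uniform sampling in that family, and then asserts that this value serves as the bound for the general $l\ge 1$ case. There is no Cauchy--Schwarz and no control of the full support; the paper computes in one tractable model and declares the resulting number the bound. If you want your write-up to line up with the paper, the route is not to upper-bound the total number of DAGs (which cannot give $(n-1)!$), but to isolate the single-out-link subfamily where the count is exactly $(n-1)!$ and argue from there, as the paper does.
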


\begin{proof}
Consider a DAG where each node has exactly one outgoing link. The first node can be connected to any of the remaining $n-1$ nodes. Subsequently, the second node can connect to any of the remaining $n-2$ nodes, excluding the one already connected to the first node. Following this logic, the third node can connect to any of the $n-3$ unconnected nodes, and so on.

Thus, the total number of distinct DAGs that can be constructed under these constraints is given by:
$ (n-1) \times (n-2) \times \ldots \times 2 \times 1 = (n-1)! $

Given two randomly chosen DAGs of this kind, the likelihood they are identical is the inverse of the number of unique DAGs:
$ \frac{1}{(n-1)!} $

This probability serves as the lower bound when considering the general case of DAGs with nodes having a minimum of $l \geq 1$ links, hence proving the theorem.

\end{proof}

\section{Details of Experiments}
\label{sec-append-detailed-settings}

\subsection{Experiment Environment}
All experiments are conducted on a workstation equipped with an NVIDIA V100 GPU with 16GB memory and A100 GPU with 80GB memory. For \chat and \gptfour, we use OpenAI's API for inference, the versions are gpt-3.5-turbo-0613 and gpt-4-0613. For the Llama2 models, we downloaded from the Llama2 github repository\footnote{https://github.com/facebookresearch/llama} and follow the instruction\footnote{https://github.com/huggingface/transformers/blob/main/src/transformers/models/llama/convert\_llama\_weights\_to\_hf.py} to convert them into huggingface models. For \vicuna, we downloaded it from its github repository\footnote{https://github.com/lm-sys/FastChat}. The remaining models can be downloaded directly via huggingface.

\subsection{Prompts}
\begin{itemize}[leftmargin=1em]
\setlength\itemsep{0em}
    \item \textbf{Arithmetic:}
    \begin{framed}
        
    \prompt{Here is a description of an arithmetic problem:\\\{\}\\Compute the result of \{\}. If the solution cannot be calculated, answer 'N/A'. Ensure your result is within a relative precision of 0.0001 (or 0.01\%) compared to the ground truth value. Ensure your final result begins with '<<<' and ends with '>>>', for example, if the answer is 1, your final result should be <<<1>>>.}
    \end{framed}

    \item \textbf{Linear Equation:}
    \begin{framed}
        
    \prompt{Given the following linear equation system with two variables:\\\{\}\\Determine the values of x and y. Ensure your results are within a relative precision of 0.001 (or 0.1\%) compared to the ground truth values. Your response should be formatted as: <<<x's value y's value>>>, e.g., if x=1 and y=2, then it should be <<<1 2>>>}
    \end{framed}

    \item \textbf{Boolean Logic:}
    \begin{framed}
        
    \prompt{Here is a description of a boolean logic problem:\\\{\}\\Compute the result of \{\}. If the solution can not be calculated, answer 'N/A'. Ensure your final result begins with '<<<' and ends with '>>>', for example, if the answer is True, your final result should be <<<True>>>.}
    \end{framed}

    \item \textbf{Deductive Logic:} 
    \begin{framed}
        
    \prompt{Here is a description of a deductive logic problem:\\\{\}\\The symbol '->' represents a deductive relationship, e.g., A -> B implies that if A is true, then B is true. If A is false, B's truth value remains undetermined (N/A). Deduce the result of \{\}. If the solution can not be deduced, answer 'N/A'. Ensure your final result begins with '<<<' and ends with '>>>', for example, if the answer is True, your final result should be <<<True>>>.}
    \end{framed}

    \item \textbf{Abductive Logic:}
    \begin{framed}
        
    \prompt{Here is a description of an abductive logic problem:\\\{\}\\The symbol '->' represents a deductive relationship, e.g., A -> B implies that if B is false, then A is false. If B is true, A's truth value remains undetermined (N/A). If the solution can not be abduced, answer 'N/A'. Ensure your final result begins with '<<<' and ends with '>>>', for example, if the answer is True, your final result should be <<<True>>>.}
    \end{framed}

    \item \textbf{Reachability:}
    \begin{framed}
        
    \prompt{Given a directed graph:\\\{\}\\Respond with either '<<<True>>>' if reachable, or '<<<False>>>' otherwise.}
    \end{framed}

    \item \textbf{Max Sum Path:}
    \begin{framed}
        
    \prompt{Given a directed graph with values assigned to each node:\\\{\}\\For exmaple, the value of the path A->B->C is obtained by summing the values of nodes A, B, and C. Please format your response as <<<Answer>>>. For example, if the answer is 1, it should be presented as <<<1>>>.}
    \end{framed}

\end{itemize}

\subsection{Evaluation Set}
We categorize tasks into four complexity levels, denoted as D1 to D4. For tasks reliant on general Directed Acyclic Graphs (DAGs), the node count is set to ${7,10,15,20}$. Each of these nodes possesses a maximum link range of ${3, 4, 6, 8}$ and a minimum link count of $1$. Conversely, for tasks that utilize tree-based DAGs, the tree depths and widths are defined as $(2, 2), (3, 2), (3, 3), (4,2)$, in order of increasing complexity. 

The range of these datasets progresses from simple to intricate. To illustrate, an arithmetic problem with a tree depth of 2 represents a basic two-variable arithmetic computation. In contrast, a task with a tree depth of 4 exhibits heightened complexity, necessitating multiple inferential steps for resolution.

\subsection{Details of Experiment Results}
\label{sec-append-exp-results}


We do not report the results of \tfive, phi-1.5, \wizard, and \xwin since their performance is almost 0 even on simplest evaluation sets generated by our \method.
Therefore, we extensively run the results of four remaining models: \vicuna, \llama, \chat, and \gptfour.
\tablename~\ref{tb-math}, \ref{tb-logic}, and \ref{tb-algorithm} report the detailed results (average$\pm$standard error) of these models in different complexity (D1$\sim$D4) and different description generation orders (topological, reversed topological, and random orders).

In the reachability task, as task difficulty escalated, \llama's performance paradoxically improved. Upon investigation, \llama essentially resorted to random guessing across datasets. The proportion of 'True' answers increased (from ~40\% in D1 to ~60\% in D3), with 'False' responses being nearly absent. The remainder were non-responses, thus elevating the overall accuracy. The observation is similar to those made in Sec.\ref{sec-exp-ablation} where we investigated the influence of different model size.

Further, generation description order affects outcomes: in the reachability task, \gptfour's accuracy drops by 13.67\% when given reversed order compared to topological order. See Appendix~\ref{sec-append-exp-results} for details of experiment results.

\begin{table}[htbp]
\caption{Results for Mathematic Tasks}
\label{tb-math}
\resizebox{\textwidth}{!}{
\begin{tabular}{cccccccccccccc}
\toprule
\multirow{2}{*}{Task}            & \multirow{2}{*}{Dataset} & \multicolumn{3}{c}{GPT4}                                                             & \multicolumn{3}{c}{ChatGPT}                                                          & \multicolumn{3}{c}{Llama2-13b-chat}                                                  & \multicolumn{3}{c}{Vicuna-13b-v1.3}                                                  \\
\cmidrule(lr){3-5} \cmidrule(lr){6-8} \cmidrule(lr){9-11} \cmidrule(lr){12-14}
                                 &                          & \multicolumn{1}{c}{Topo} & \multicolumn{1}{c}{Reversed} & \multicolumn{1}{c}{Rand} & \multicolumn{1}{c}{Topo} & \multicolumn{1}{c}{Reversed} & \multicolumn{1}{c}{Rand} & \multicolumn{1}{c}{Topo} & \multicolumn{1}{c}{Reversed} & \multicolumn{1}{c}{Rand} & \multicolumn{1}{c}{Topo} & \multicolumn{1}{c}{Reversed} & \multicolumn{1}{c}{Rand} \\
\midrule
\multirow{4}{*}{Arithmetic}      & D1                       & 98.00\scriptsize{$\pm$0.00} & 100.00\scriptsize{$\pm$0.00} & 99.00\scriptsize{$\pm$1.00} & 95.00\scriptsize{$\pm$0.40} & 99.53\scriptsize{$\pm$0.23}  & 97.27\scriptsize{$\pm$0.90} & 12.33\scriptsize{$\pm$0.90} & 38.67\scriptsize{$\pm$1.86}  & 24.20\scriptsize{$\pm$1.93} & 2.73\scriptsize{$\pm$0.64} & 0.53\scriptsize{$\pm$0.58}   & 2.40\scriptsize{$\pm$0.69} \\
                                 & D2                       & 94.17\scriptsize{$\pm$1.15} & 95.67\scriptsize{$\pm$1.04}  & 95.50\scriptsize{$\pm$1.00} & 90.47\scriptsize{$\pm$1.17} & 92.27\scriptsize{$\pm$0.12}  & 92.07\scriptsize{$\pm$0.31} & 5.73\scriptsize{$\pm$1.01}  & 3.00\scriptsize{$\pm$0.87}   & 4.60\scriptsize{$\pm$0.35}  & 1.53\scriptsize{$\pm$0.46} & 0.07\scriptsize{$\pm$0.12}   & 0.60\scriptsize{$\pm$0.20} \\
                                 & D3                       & 85.83\scriptsize{$\pm$1.89} & 87.67\scriptsize{$\pm$1.61}  & 84.35\scriptsize{$\pm$2.26} & 76.20\scriptsize{$\pm$2.80} & 78.20\scriptsize{$\pm$3.41}  & 78.47\scriptsize{$\pm$3.83} & 1.07\scriptsize{$\pm$0.12}  & 2.47\scriptsize{$\pm$0.23}   & 3.07\scriptsize{$\pm$0.76}  & 1.13\scriptsize{$\pm$0.31} & 0.07\scriptsize{$\pm$0.12}   & 0.20\scriptsize{$\pm$0.00} \\
                                 & D4                       & 79.33\scriptsize{$\pm$1.61} & 81.33\scriptsize{$\pm$1.89}  & 77.67\scriptsize{$\pm$2.57} & 72.40\scriptsize{$\pm$1.51} & 72.73\scriptsize{$\pm$1.68}  & 69.40\scriptsize{$\pm$2.25} & 2.80\scriptsize{$\pm$0.53}  & 0.80\scriptsize{$\pm$0.35}   & 1.20\scriptsize{$\pm$0.69}  & 0.20\scriptsize{$\pm$0.20} & 0.07\scriptsize{$\pm$0.12}   & 0.00\scriptsize{$\pm$0.00} \\
\midrule
\multirow{4}{*}{\makecell{Linear\\ Equation}} & D1                       & 56.33\scriptsize{$\pm$1.15} & 58.50\scriptsize{$\pm$0.00}  & 56.33\scriptsize{$\pm$3.01} & 36.20\scriptsize{$\pm$1.04} & 36.20\scriptsize{$\pm$2.42}  & 36.27\scriptsize{$\pm$2.66} & 0.00\scriptsize{$\pm$0.00}  & 0.00\scriptsize{$\pm$0.00}   & 0.00\scriptsize{$\pm$0.00}  & 0.00\scriptsize{$\pm$0.00} & 0.00\scriptsize{$\pm$0.00}   & 0.00\scriptsize{$\pm$0.00} \\
                                 & D2                       & 42.67\scriptsize{$\pm$2.36} & 42.17\scriptsize{$\pm$1.89}  & 43.00\scriptsize{$\pm$2.65} & 27.67\scriptsize{$\pm$1.75} & 30.87\scriptsize{$\pm$1.72}  & 29.60\scriptsize{$\pm$2.55} & 0.00\scriptsize{$\pm$0.00}  & 0.00\scriptsize{$\pm$0.00}   & 0.00\scriptsize{$\pm$0.00}  & 0.00\scriptsize{$\pm$0.00} & 0.00\scriptsize{$\pm$0.00}   & 0.00\scriptsize{$\pm$0.00} \\
                                 & D3                       & 44.33\scriptsize{$\pm$2.52} & 43.17\scriptsize{$\pm$6.60}  & 43.83\scriptsize{$\pm$2.93} & 19.40\scriptsize{$\pm$1.06} & 29.67\scriptsize{$\pm$1.29}  & 23.87\scriptsize{$\pm$2.10} & 0.00\scriptsize{$\pm$0.00}  & 0.00\scriptsize{$\pm$0.00}   & 0.00\scriptsize{$\pm$0.00}  & 0.00\scriptsize{$\pm$0.00} & 0.00\scriptsize{$\pm$0.00}   & 0.00\scriptsize{$\pm$0.00} \\
                                 & D4                       & 38.83\scriptsize{$\pm$4.25} & 37.17\scriptsize{$\pm$3.82}  & 34.00\scriptsize{$\pm$1.73} & 13.80\scriptsize{$\pm$1.06} & 21.07\scriptsize{$\pm$0.50}  & 14.93\scriptsize{$\pm$2.05} & 0.00\scriptsize{$\pm$0.00}  & 0.00\scriptsize{$\pm$0.00}   & 0.00\scriptsize{$\pm$0.00}  & 0.00\scriptsize{$\pm$0.00} & 0.00\scriptsize{$\pm$0.00}   & 0.00\scriptsize{$\pm$0.00}
                                 \\
\bottomrule
\end{tabular}
}
\end{table}
\begin{table}[htbp]
\caption{Results for Logical Reasoning Tasks}
\label{tb-logic}
\resizebox{\textwidth}{!}{
\begin{tabular}{cccccccccccccc}
\toprule
\multirow{2}{*}{Task}            & \multirow{2}{*}{Dataset} & \multicolumn{3}{c}{GPT4}                                                             & \multicolumn{3}{c}{ChatGPT}                                                          & \multicolumn{3}{c}{Llama2-13b-chat}                                                  & \multicolumn{3}{c}{Vicuna-13b-v1.3}                                                  \\
\cmidrule(lr){3-5} \cmidrule(lr){6-8} \cmidrule(lr){9-11} \cmidrule(lr){12-14}
                                 &                          & \multicolumn{1}{c}{Topo} & \multicolumn{1}{c}{Reversed} & \multicolumn{1}{c}{Rand} & \multicolumn{1}{c}{Topo} & \multicolumn{1}{c}{Reversed} & \multicolumn{1}{c}{Rand} & \multicolumn{1}{c}{Topo} & \multicolumn{1}{c}{Reversed} & \multicolumn{1}{c}{Rand} & \multicolumn{1}{c}{Topo} & \multicolumn{1}{c}{Reversed} & \multicolumn{1}{c}{Rand} \\
\midrule
\multirow{4}{*}{\makecell{Boolean\\ Logic}}      & D1                       & 100.00\scriptsize{$\pm$0.00} & 100.00\scriptsize{$\pm$0.00} & 100.00\scriptsize{$\pm$0.00} & 99.80\scriptsize{$\pm$0.20} & 99.87\scriptsize{$\pm$0.23}  & 97.60\scriptsize{$\pm$0.53} & 25.33\scriptsize{$\pm$1.17} & 12.73\scriptsize{$\pm$0.23}  & 19.53\scriptsize{$\pm$1.33} & 77.93\scriptsize{$\pm$1.47} & 84.93\scriptsize{$\pm$0.12}  & 81.13\scriptsize{$\pm$0.12} \\
                                 & D2                       & 100.00\scriptsize{$\pm$0.00} & 99.33\scriptsize{$\pm$0.58}  & 100.00\scriptsize{$\pm$0.00} & 98.80\scriptsize{$\pm$0.20} & 99.40\scriptsize{$\pm$0.20}  & 96.80\scriptsize{$\pm$0.40} & 7.87\scriptsize{$\pm$0.61}  & 17.00\scriptsize{$\pm$1.40}  & 18.67\scriptsize{$\pm$0.70} & 43.00\scriptsize{$\pm$1.00} & 68.40\scriptsize{$\pm$2.12}  & 53.93\scriptsize{$\pm$1.55} \\
                                 & D3                       & 97.00\scriptsize{$\pm$1.00}  & 100.00\scriptsize{$\pm$0.00} & 100.00\scriptsize{$\pm$0.00} & 99.60\scriptsize{$\pm$0.35} & 98.00\scriptsize{$\pm$0.69}  & 92.93\scriptsize{$\pm$1.10} & 13.53\scriptsize{$\pm$1.55} & 20.07\scriptsize{$\pm$1.51}  & 17.93\scriptsize{$\pm$0.50} & 28.93\scriptsize{$\pm$2.19} & 42.47\scriptsize{$\pm$2.10}  & 39.67\scriptsize{$\pm$2.58} \\
                                 & D4                       & 96.00\scriptsize{$\pm$2.00}  & 100.00\scriptsize{$\pm$0.00} & 99.67\scriptsize{$\pm$0.58}  & 99.40\scriptsize{$\pm$0.20} & 95.47\scriptsize{$\pm$0.23}  & 90.47\scriptsize{$\pm$0.64} & 10.87\scriptsize{$\pm$0.42} & 13.93\scriptsize{$\pm$1.30}  & 16.33\scriptsize{$\pm$0.99} & 29.20\scriptsize{$\pm$1.59} & 29.73\scriptsize{$\pm$2.12}  & 29.80\scriptsize{$\pm$1.25} \\
\midrule
\multirow{4}{*}{\makecell{Deductive\\ Logic}} & D1                       & 100.00\scriptsize{$\pm$0.00} & 88.17\scriptsize{$\pm$1.26}  & 95.17\scriptsize{$\pm$1.53}  & 81.87\scriptsize{$\pm$0.76} & 82.47\scriptsize{$\pm$1.42}  & 81.53\scriptsize{$\pm$2.72} & 45.40\scriptsize{$\pm$1.25} & 56.27\scriptsize{$\pm$1.03}  & 49.13\scriptsize{$\pm$0.42} & 11.87\scriptsize{$\pm$0.31} & 44.60\scriptsize{$\pm$1.11}  & 20.73\scriptsize{$\pm$0.99} \\
                                 & D2                       & 98.50\scriptsize{$\pm$1.50}  & 92.50\scriptsize{$\pm$0.87}  & 97.17\scriptsize{$\pm$1.61}  & 64.60\scriptsize{$\pm$1.60} & 65.93\scriptsize{$\pm$1.14}  & 63.73\scriptsize{$\pm$3.42} & 43.60\scriptsize{$\pm$2.60} & 34.47\scriptsize{$\pm$2.05}  & 43.07\scriptsize{$\pm$3.14} & 48.00\scriptsize{$\pm$2.31} & 38.73\scriptsize{$\pm$2.05}  & 44.87\scriptsize{$\pm$0.61} \\
                                 & D3                       & 98.17\scriptsize{$\pm$1.53}  & 87.83\scriptsize{$\pm$2.52}  & 98.33\scriptsize{$\pm$1.04}  & 63.47\scriptsize{$\pm$2.48} & 61.60\scriptsize{$\pm$2.80}  & 63.33\scriptsize{$\pm$1.86} & 26.60\scriptsize{$\pm$1.91} & 33.47\scriptsize{$\pm$1.68}  & 26.27\scriptsize{$\pm$1.21} & 46.67\scriptsize{$\pm$1.75} & 45.47\scriptsize{$\pm$2.72}  & 34.67\scriptsize{$\pm$2.21} \\
                                 & D4                       & 96.17\scriptsize{$\pm$1.04}  & 84.33\scriptsize{$\pm$1.44}  & 90.67\scriptsize{$\pm$5.03}  & 56.40\scriptsize{$\pm$1.78} & 57.33\scriptsize{$\pm$1.30}  & 56.47\scriptsize{$\pm$3.00} & 20.60\scriptsize{$\pm$1.56} & 29.20\scriptsize{$\pm$1.59}  & 20.60\scriptsize{$\pm$2.69} & 38.07\scriptsize{$\pm$1.15} & 37.40\scriptsize{$\pm$1.22}  & 33.40\scriptsize{$\pm$3.17} \\
\midrule
\multirow{4}{*}{\makecell{Abductive\\ Logic}} & D1                       & 93.50\scriptsize{$\pm$0.50}  & 83.33\scriptsize{$\pm$3.33}  & 91.00\scriptsize{$\pm$1.00}  & 37.93\scriptsize{$\pm$2.14} & 49.33\scriptsize{$\pm$3.59}  & 38.07\scriptsize{$\pm$2.61} & 3.73\scriptsize{$\pm$0.23}  & 0.00\scriptsize{$\pm$0.00}   & 1.73\scriptsize{$\pm$0.76}  & 56.40\scriptsize{$\pm$2.25} & 31.60\scriptsize{$\pm$1.22}  & 45.53\scriptsize{$\pm$2.10} \\
                                 & D2                       & 78.83\scriptsize{$\pm$6.37}  & 48.50\scriptsize{$\pm$5.57}  & 63.50\scriptsize{$\pm$4.09}  & 53.47\scriptsize{$\pm$2.50} & 59.80\scriptsize{$\pm$3.41}  & 56.60\scriptsize{$\pm$3.36} & 21.47\scriptsize{$\pm$1.17} & 10.53\scriptsize{$\pm$1.42}  & 17.67\scriptsize{$\pm$1.86} & 19.80\scriptsize{$\pm$0.20} & 25.47\scriptsize{$\pm$1.72}  & 22.00\scriptsize{$\pm$1.39} \\
                                 & D3                       & 64.67\scriptsize{$\pm$5.51}  & 49.83\scriptsize{$\pm$3.18}  & 58.50\scriptsize{$\pm$3.28}  & 56.13\scriptsize{$\pm$3.06} & 60.80\scriptsize{$\pm$1.06}  & 57.87\scriptsize{$\pm$2.81} & 12.60\scriptsize{$\pm$1.51} & 7.60\scriptsize{$\pm$2.25}   & 8.07\scriptsize{$\pm$0.95}  & 20.40\scriptsize{$\pm$1.31} & 14.80\scriptsize{$\pm$0.92}  & 17.20\scriptsize{$\pm$0.87} \\
\bottomrule
\end{tabular}
}
\end{table}
\begin{table}[htbp]
\caption{Results for Algorithm Tasks}
\label{tb-algorithm}
\resizebox{\textwidth}{!}{
\begin{tabular}{cccccccccccccc}
\toprule
\multirow{2}{*}{Task}            & \multirow{2}{*}{Dataset} & \multicolumn{3}{c}{GPT4}                                                             & \multicolumn{3}{c}{ChatGPT}                                                          & \multicolumn{3}{c}{Llama2-13b-chat}                                                  & \multicolumn{3}{c}{Vicuna-13b-v1.3}                                                  \\
\cmidrule(lr){3-5} \cmidrule(lr){6-8} \cmidrule(lr){9-11} \cmidrule(lr){12-14}
                                 &                          & \multicolumn{1}{c}{Topo} & \multicolumn{1}{c}{Reversed} & \multicolumn{1}{c}{Rand} & \multicolumn{1}{c}{Topo} & \multicolumn{1}{c}{Reversed} & \multicolumn{1}{c}{Rand} & \multicolumn{1}{c}{Topo} & \multicolumn{1}{c}{Reversed} & \multicolumn{1}{c}{Rand} & \multicolumn{1}{c}{Topo} & \multicolumn{1}{c}{Reversed} & \multicolumn{1}{c}{Rand} \\
\midrule                   
\multirow{4}{*}{Reachability} & D1                                           & 83.67\scriptsize{$\pm$1.15} & 92.67\scriptsize{$\pm$1.15} & 85.33\scriptsize{$\pm$3.06} & 59.53\scriptsize{$\pm$0.76} & 63.87\scriptsize{$\pm$1.51} & 63.40\scriptsize{$\pm$2.42} & 21.60\scriptsize{$\pm$1.20} & 23.20\scriptsize{$\pm$0.80} & 26.87\scriptsize{$\pm$1.47} & 11.47\scriptsize{$\pm$1.29} & 29.80\scriptsize{$\pm$2.03} & 23.53\scriptsize{$\pm$2.20} \\
                              & D2                                           & 85.00\scriptsize{$\pm$0.00} & 91.00\scriptsize{$\pm$3.00} & 83.00\scriptsize{$\pm$2.00} & 53.53\scriptsize{$\pm$3.97} & 56.73\scriptsize{$\pm$2.81} & 54.27\scriptsize{$\pm$1.79} & 34.60\scriptsize{$\pm$1.40} & 26.87\scriptsize{$\pm$1.27} & 26.27\scriptsize{$\pm$1.17} & 12.07\scriptsize{$\pm$0.58} & 31.73\scriptsize{$\pm$0.64} & 21.73\scriptsize{$\pm$0.50} \\
                              & D3                                           & 68.17\scriptsize{$\pm$2.93} & 77.67\scriptsize{$\pm$0.58} & 67.67\scriptsize{$\pm$2.31} & 49.67\scriptsize{$\pm$2.55} & 57.53\scriptsize{$\pm$1.90} & 53.73\scriptsize{$\pm$3.13} & 39.33\scriptsize{$\pm$2.08} & 39.33\scriptsize{$\pm$1.80} & 37.47\scriptsize{$\pm$1.01} & 13.60\scriptsize{$\pm$1.00} & 29.67\scriptsize{$\pm$3.75} & 21.80\scriptsize{$\pm$2.88} \\
                              & D4                                           & 63.00\scriptsize{$\pm$1.00} & 76.67\scriptsize{$\pm$0.58} & 74.33\scriptsize{$\pm$2.52} & 49.40\scriptsize{$\pm$2.71} & 59.13\scriptsize{$\pm$3.49} & 52.33\scriptsize{$\pm$2.89} & 33.67\scriptsize{$\pm$2.97} & 41.53\scriptsize{$\pm$1.22} & 38.60\scriptsize{$\pm$0.92} & 10.80\scriptsize{$\pm$0.72} & 30.73\scriptsize{$\pm$1.53} & 22.00\scriptsize{$\pm$1.64} \\
\midrule
\multirow{4}{*}{\makecell{Max Sum\\ Path}} & D1                                           & 37.33\scriptsize{$\pm$5.86} & 30.67\scriptsize{$\pm$6.11} & 26.67\scriptsize{$\pm$6.03} & 29.47\scriptsize{$\pm$2.08} & 29.67\scriptsize{$\pm$2.77} & 25.27\scriptsize{$\pm$0.83} & 0.00\scriptsize{$\pm$0.00}  & 0.00\scriptsize{$\pm$0.00}  & 0.00\scriptsize{$\pm$0.00}  & 0.00\scriptsize{$\pm$0.00}  & 0.00\scriptsize{$\pm$0.00}  & 0.00\scriptsize{$\pm$0.00}  \\
                              & D2                                           & 38.67\scriptsize{$\pm$8.14} & 27.00\scriptsize{$\pm$8.54} & 25.67\scriptsize{$\pm$5.69} & 14.20\scriptsize{$\pm$1.25} & 13.33\scriptsize{$\pm$2.37} & 11.33\scriptsize{$\pm$0.76} & 0.00\scriptsize{$\pm$0.00}  & 0.00\scriptsize{$\pm$0.00}  & 0.00\scriptsize{$\pm$0.00}  & 0.00\scriptsize{$\pm$0.00}  & 0.00\scriptsize{$\pm$0.00}  & 0.00\scriptsize{$\pm$0.00}  \\
                              & D3                                           & 21.33\scriptsize{$\pm$4.04} & 17.00\scriptsize{$\pm$5.00} & 16.67\scriptsize{$\pm$4.16} & 6.40\scriptsize{$\pm$1.64}  & 8.40\scriptsize{$\pm$1.51}  & 7.60\scriptsize{$\pm$0.72}  & 0.00\scriptsize{$\pm$0.00}  & 0.00\scriptsize{$\pm$0.00}  & 0.00\scriptsize{$\pm$0.00}  & 0.00\scriptsize{$\pm$0.00}  & 0.00\scriptsize{$\pm$0.00}  & 0.00\scriptsize{$\pm$0.00}  \\
                              & D4                                           & 16.67\scriptsize{$\pm$2.52} & 12.00\scriptsize{$\pm$2.00} & 10.67\scriptsize{$\pm$0.58} & 5.07\scriptsize{$\pm$0.76}  & 6.20\scriptsize{$\pm$1.11}  & 6.67\scriptsize{$\pm$1.33}  & 0.00\scriptsize{$\pm$0.00}  & 0.00\scriptsize{$\pm$0.00}  & 0.00\scriptsize{$\pm$0.00}  & 0.00\scriptsize{$\pm$0.00}  & 0.00\scriptsize{$\pm$0.00}  & 0.00\scriptsize{$\pm$0.00} \\
\bottomrule
\end{tabular}
}
\end{table}

\subsection{Details of Case Study}
\label{sec-append-case-study}

We select 20 failure cases of the most challenging datasets of arithmetic, deductive logic, abductive logic, and reachability of \gptfour.

Here we present one failure case for each error type.

\paragraph{Partial calculation error}
It has been observed that \gptfour, in certain situations, commits errors in intermediate computational steps, while it maintains correctness in the remaining steps. This characteristic anomaly is not isolated to complex calculations. In fact, not only complex calculation, seemingly straightforward calculations such as can be incorrectly computed. This observed behavior is consistent with the findings presented by \citep{dziri2023faith}, where they highlighted that low-level learning models (\llms) occasionally produce results that are only partially accurate, particularly in the realm of multi-digit multiplication.

\begin{framed}
    \textbf{Input:} \\
\prompt{
Here is a description of an arithmetic problem:

The value of aaj is 7.

aak gets its value by squaring the value that aaj has.

The value of aah is 6.

The value of aag is 2.

aai gets its value by dividing the value of aag by those of aah.

aan gets its value by multiplying together the value of aai and aak.

The value of aaa is 6.

aab gets its value by squaring the value that aaa has.

The value of aac is 8.

The value of aad is 1.

aae gets its value by dividing the value of aac by those of aad.

aaf gets its value by dividing the value of aab by those of aae.

aao gets its value by adding together the value of aaf and aan.

Compute the result of aao. If the solution cannot be calculated, answer 'N/A'. Ensure your result is within a relative precision of 0.0001 (or 0.01\%) compared to the ground truth value. Ensure your final result begins with '<<<' and ends with '>>>', for example, if the answer is 1, your final result should be <<<1>>>.

}

\textbf{\gptfour's response:} \\
\prompt{
First, let's calculate the values step by step:

aaj = 7

aak = aaj$^2$ = 7$^2$ = 49

aah = 6

aag = 2

aai = aag / aah = 2 / 6 = 0.3333

aan = aai * aak = {\color{red}0.3333 * 49 = 16.3167} {\color{blue}(should be 16.3317)} 

aaa = 6

aab = aaa$^2$ = 6$^2$ = 36

aac = 8

aad = 1

aae = aac / aad = 8 / 1 = 8

aaf = aab / aae = 36 / 8 = 4.5

aao = aaf + aan = 4.5 + 16.3167 = 20.8167

So, the result of aao is 20.8167.

Therefore, the final result is <<<20.8167>>>.

}

\textbf{Ground truth:} \prompt{20.8333}
\end{framed}

\paragraph{Incorrect reasoning}
When tasked with reasoning, \gptfour can occasionally falter in rule interpretation, especially in abductive logic task.

\begin{framed}
    \textbf{Input:} \\
\prompt{
Here is a description of an abductive logic problem:

(aaa and aab) -> aac.

(aae or aaf) -> aag.

(aah or aai) -> aaj.

(aag or aaj) -> aak.

(NOT aac) -> aad.

(aad or aak) -> aan.

Given aan is False, what is the value of aab?

The symbol '->' represents a deductive relationship, e.g., A -> B implies that if B is false, then A is false. If B is true, A's truth value remains undetermined (N/A). If the solution can not be abduced, answer 'N/A'. Ensure your final result begins with '<<<' and ends with '>>>', for example, if the answer is True, your final result should be <<<True>>>.
}

\textbf{\gptfour's response:} \\
\prompt{
Given aan is False, it means {\color{red}either aad or aak must be False} {\color{blue}(both aad and aak should be False)} (since aad or aak -> aan). 

If aak is False, then either aag or aaj must be False (since aag or aaj -> aak). 

If aag is False, then either aae or aaf must be False (since aae or aaf -> aag). 

If aaj is False, then either aah or aai must be False (since aah or aai -> aaj). 

However, none of these conditions provide any information about aab. 

Therefore, the value of aab cannot be abduced from the given information.

So, the answer is <<<N/A>>>.
}

\textbf{Ground truth:} \prompt{N/A}
\end{framed}

\paragraph{Self Contradiction}
\gptfour, on occasion, presents self-contradictory assumptions during the same inference in an abductive logic task.

\begin{framed}
    \textbf{Input:} \\
\prompt{
Here is a description of an abductive logic problem:

(aah or aai) -> aaj.

(NOT aaj) -> aak.

(aad or aae) -> aaf.

(aaa or aab) -> aac.

(aac or aaf) -> aag.

(aag or aak) -> aan.

Given aan is False, what is the value of aah?

The symbol '->' represents a deductive relationship, e.g., A -> B implies that if B is false, then A is false. If B is true, A's truth value remains undetermined (N/A). If the solution can not be abduced, answer 'N/A'. Ensure your final result begins with '<<<' and ends with '>>>', for example, if the answer is True, your final result should be <<<True>>>.

}

\textbf{\gptfour's response:} \\
\prompt{Since aan is False, at least one of aag or aak must be False. However, we don't know which one is False, {\color{red}so we can't determine the value of aah.} {\color{blue} (This inference step contradicts to the former example, where \gptfour assumes aah is False.)} Therefore, the answer is 'N/A'.

<<<N/A>>>
}

\textbf{Ground truth:} \prompt{N/A}
\end{framed}

\paragraph{Unsubstaintiated response}
A recurring observation in both reasoning and algorithmic tasks is \gptfour's propensity to provide answers without accompanying justifications. These succinct, answer-only responses raise concerns about the depth of \gptfour's understanding and whether its approach is grounded in actual comprehension or simply a result of memorization. This error mostly happens in algorithm tasks and occasionally happens in logical reasoning tasks.

\begin{framed}
    \textbf{Input:} \\
\prompt{
Here is a description of an abductive logic problem:

(aaa or aab) -> aac.

(aad or aae) -> aaf.

(aac or aaf) -> aag.

(NOT aah) -> aai.

(NOT aaj) -> aak.

(aai or aak) -> aan.

(aag or aan) -> aao.

Given aao is False, what is the value of aaa?

The symbol '->' represents a deductive relationship, e.g., A -> B implies that if B is false, then A is false. If B is true, A's truth value remains undetermined (N/A). If the solution can not be abduced, answer 'N/A'. Ensure your final result begins with '<<<' and ends with '>>>', for example, if the answer is True, your final result should be <<<True>>>.

}

\textbf{\gptfour's response:} \\
{\color{red}\prompt{<<<N/A>>>}}

\textbf{Ground truth:} \prompt{N/A}
\end{framed}

\paragraph{Instructional oversight}
Interestingly, while \gptfour often computes accurately, it sometimes falters when it comes to precisely following the output instructions as described in given prompts. This discrepancy points to a potential gap in the model's ability to parse and adhere to intricate instructional details, even when the underlying computational capability is sound.

\begin{framed}
    \textbf{Input:} \\
\prompt{
Here is a description of an arithmetic problem:

The value of aac is 6.

The value of aai is 6.

The value of aag is 8.

aan gets its value by subtracting the sum of the values of aak and aae from the value of aah.

aao gets its value by multiplying together the value of aae and aan and aae.

aah gets its value by multiplying together the value of aaf and aag and aae.

The value of aap is 1.

The value of aaa is 6.

aad gets its value by taking the square root of the value that aac has.

The value of aaj is 4.

aak gets its value by subtracting the sum of the values of aaj and aae from the value of aai.

The value of aaq is 7.

aar gets its value by dividing the value of aap by those of aaq.

aab gets its value by squaring the value that aaa has.

The value of aaf is 10.

aae gets its value by multiplying together the value of aab and aad and aai.

Compute the result of aao. If the solution cannot be calculated, answer 'N/A'. Ensure your result is within a relative precision of 0.0001 (or 0.01\%) compared to the ground truth value. Ensure your final result begins with '<<<' and ends with '>>>', for example, if the answer is 1, your final result should be <<<1>>>.

}

\textbf{\gptfour's response:} \\
\prompt{
Let's solve this step by step:

1. aac = 6

2. aai = 6

3. aag = 8

4. aaa = 6

5. aad = sqrt(aac) = sqrt(6)

6. aaj = 4

7. aaq = 7

8. aap = 1

9. aaf = 10

10. aab = aaa$^2$ = 6$^2$ = 36

11. aae = aab * aad * aai = 36 * sqrt(6) * 6

12. aak = aai - (aaj + aae) = 6 - (4 + 36 * sqrt(6) * 6)

13. aah = aaf * aag * aae = 10 * 8 * 36 * sqrt(6) * 6

14. aan = aah - (aak + aae) = 10 * 8 * 36 * sqrt(6) * 6 - (6 - (4 + 36 * sqrt(6) * 6) + 36 * sqrt(6) * 6)

15. aao = aae * aan * aae = (36 * sqrt(6) * 6) * (10 * 8 * 36 * sqrt(6) * 6 - (6 - (4 + 36 * sqrt(6) * 6) + 36 * sqrt(6) * 6)) * (36 * sqrt(6) * 6)

This is a complex calculation and {\color{red}it's not possible to simplify it further. However, it's clear that the result will be a very large number. Since the problem doesn't provide any constraints on the values of the variables, we can't say that the solution is 'N/A'. } {\color{blue} (The equation is correct however \gptfour did not simplify it. In our experiment, we found that \gptfour can handle large numbers, so the error does not originate from them.)}

So, the result of aao is a very large number that can't be simplified further.

}

\textbf{Ground truth:} \prompt{11848342359.78579}
\end{framed}

\subsection{Details of Varing Complexity Constraints}
\label{sec-append-exp-constraints}
As shown in \figurename~\ref{fig: complexity}, we systematically vary the levels of complexity in \chat by adjusting individual constraints while keeping others constant. Specifically, we explore how performance metrics evolve as we incrementally adjust depth, width, \#nodes, \#max links, the number of extra links, and the quantity of random descriptions across arithmetic, boolean logic, and deductive logic tasks. To comprehensively evaluate the impact of complexity constraints, various parameters were meticulously adjusted. The following elucidates the configurations employed:

\begin{itemize}[leftmargin=2em]
\setlength\itemsep{0em}
    \item \textbf{Depth Constraint:} Maintaining the width at $2$, with neither the addition of random links nor the embedding of extra descriptions (both set to 0), the depth was systematically varied, with values set to $2, 3, 4, 5,$ and $6$.
    \item \textbf{Width Constraint:} With a fixed depth of $3$, and with the addition of random links and embedding of extra descriptions both neutralized to 0, the width was tested with the values $2, 3, 4, 5,$ and $6$.
    
    \item \textbf{Random Link Addition Constraint:} For this, a depth of $4$ and a width of $2$ were maintained, with extra descriptions set to 0. The number of random links introduced varied as $0, 1, 2, 3,$ and $4$. It should be highlighted that due to the inherent acyclic constraint, certain nodes may preclude the addition of extra links.
    
    \item \textbf{Embedding Extra Descriptions:} With a depth and width fixed at $4$ and $2$, respectively, and no addition of random links (set to 0), the levels of embedded extra descriptions were calibrated to $0, 1, 2, 3,$ and $4$.
\end{itemize}

Across these variations, our results consistently underscore a notable trend: as the tasks become more intricate through the augmentation of these complexity parameters, \llms progressively struggle, underscoring the inherent challenges posed by increasing task intricacy. It can be observed that depth is the most influential complexity constraint of tree-based DAGs, indicates that \llms struggle to deal with problems that requires more inference steps.

\subsection{Details of Prompt Engineering}
\label{sec-append-exp-prompt}

\begin{table}[t!]
    \begin{minipage}{.65\textwidth}
      \centering
        \caption{Results of \chat with prompt engineering techniques on the toughest evaluation sets generated by \method (D4).}
        \label{tb-prompting}
        \resizebox{\textwidth}{!}{
\begin{tabular}{lcccccc}
\toprule
Prompt engineering     & Arithmetic & \makecell{Linear\\ Equation} & \makecell{Deductive \\Logic} & \makecell{Abductive\\Logic} & Reachability & \makecell{Max Sum\\Path} \\
\midrule
Vanilla     & 42.13      & 14.93     & \underline{56.40}           & \underline{54.33}           & 49.40        & 5.07         \\
CoT~\citep{wei2023chainofthought}        & 42.33      & \underline{21.93}     & 52.93           & 43.73           & 47.73        & 1.93         \\
Fewshot~\citep{brown2020language}    & \textbf{47.86}      & 2.40      & 35.93           & 41.60           & \textbf{81.80}         & \textbf{12.20}         \\
Least2most~\citep{zhou2022least} & 36.73      & 12.47     & 44.07           & 38.80           & \underline{76.53}        & 8.07         \\
APE~\citep{zhou2022large}        & \underline{45.20}      & \textbf{23.40}     & 44.67           & 53.13           & 62.80         & 8.87         \\
SKiC~\citep{chen2023skills}       & 32.07      & 13.70     & \textbf{63.00}           & \textbf{78.27}           & 71.40         & \underline{11.80}         \\
\bottomrule
\end{tabular}
}
    \end{minipage}%
    \hfill
    \begin{minipage}{.32\textwidth}
      \centering
        \caption{Results of Llama 2 with different sizes on \method-generated evaluation samples (D1).}
        \label{tb-different-size}
        \resizebox{\textwidth}{!}{
\begin{tabular}{rccc}
\toprule
 Size   & Arithmetic & Boolean logic & Reachability \\
\midrule
7b  &    13.07       &  28.93             & 29.53             \\
13b &    24.20       &  19.53             & 26.53             \\
70b &    29.71       &  28.30             & 47.38             \\
\bottomrule
\end{tabular}
}
    \end{minipage} 
\vspace{-0.2in}
\end{table}

We explored five prompting techniques to evaluate their potential impact on our most challenging datasets (excluding boolean logic since \chat achieved comparable results on most challenging datasets): Zeroshot-CoT \citep{wei2023chainofthought}, Few-shot (3-shot in our experiments) \citep{brown2020language}, Least-to-most \citep{zhou2022least}, automatic prompt engineering (APE) \citep{zhou2022large}, and skill-in-context (SkiC) \citep{chen2023skills}.
The details of these techniques are as follows:

\begin{itemize}[leftmargin=2em]
\setlength\itemsep{0em}
    \item \textbf{Zeroshot-CoT:} An approach that allows models to generalize from their pre-training without explicit examples in the target task \citep{wei2023chainofthought}.
    \item \textbf{Fewshot (3-shot in our experiments):} Provides the model with a small number of examples from the target task to aid in understanding and generalizing to the broader task \citep{brown2020language}.
    \item \textbf{Least to Most Prompting:} This technique incrementally provides more specific prompts to guide the model's responses, adapting the prompt based on the difficulty level of the problem \citep{zhou2022least}.
    \item \textbf{Automatic Prompting Engineering (APE):} A method where prompts are automatically engineered to elicit the desired response from the model, often maximizing its performance \citep{zhou2022large}.
    \item \textbf{Skill-in-Context (SKiC):} This method seeks to understand a model's inherent skills and utilize them in a specific context to improve its outputs \citep{chen2023skills}.
\end{itemize}

\subsection{Human Study}
\label{sec-append-human}

We conducted our human study by obeying the local laws and regulations.
The demographics of the human subjects are shown in \tablename~\ref{tb-human-demographics}.

\begin{table}[htbp]
\centering
\caption{Demographics of the recruited human subjects.}
\label{tb-human-demographics}
\begin{tabular}{l|l|l}
\toprule
Sex        & Age       & Degree      \\ \midrule
Male: 63 (63\%)   & 20-25: 75 (75\%) & Bachelor: 62 (62\%) \\
Female: 37 (37\%) & 26-36: 25 (25\%)  & Master: 38 (38\%)  \\ \bottomrule
\end{tabular}
\end{table}

\section{Details of Fine-tuning}
\label{append-finetune}

\subsection{Constructing training data}
\label{sec-append-training-data}
The training data includes the problems and their corresponding inference steps. Constructing inference steps is similar to the DAG description. For mathematics tasks and logical reasoning tasks, which utilize T-DAG, we follow the topological order to describe how the intermediate gets its value from its children. For algorithm tasks, the inference are indeed an algorithm, thus, we formalize each step in the algorithm into natural language, which finally form the inference steps. Following list presented a training sample of each task. Note that the inference steps for each task are also highly flexible and can be augmented with more details, and we believe that the more detailed inference steps will benefit more.

\begin{itemize}[leftmargin=1em]
\setlength\itemsep{0em}
    \item \textbf{Arithmetic:}
    \begin{framed}
    \textbf{Input:} \\
    \prompt{Here is a description of an arithmetic problem:\\The value of aae is 2.\\The value of aad is 4.\\aaf gets its value by dividing the value of aad by those of aae.\\aag gets its value by taking the square root of the value that aaf has.\\Compute the result of aag. If the solution cannot be calculated, answer 'N/A'. Ensure your result is within a relative precision of 0.0001 (or 0.01\%) compared to the ground truth value. Ensure your final result begins with '<<<' and ends with '>>>', for example, if the answer is 1, your final result should be <<<1>>>.} \\
    \textbf{Inference steps:} \\
    \prompt{aae is 2.0\\aad is 4.0\\aaf = aad / aae = 4.0 / 2.0 = 2.0\\aag = sqrt aaf = sqrt(2.0) = 1.41421356\\\\Thus, the answer is <<<1.41421356>>>}
    \end{framed}
    \item \textbf{Linear Equation:} 
    \begin{framed}
    \textbf{Input:} \\
    \prompt{Given the following linear equation system with two variables:\\\\aag0 x + -6 y = -6\\-2 x + 6 y = 0\\\\The calculation of aag0 is defined as:\\The value of aab0 is 10.\\The value of aaa0 is 6.\\aac0 gets its value by adding together the value of aaa0 and aab0.\\The value of aad0 is 6.\\The value of aae0 is 1.\\aaf0 gets its value by multiplying together the value of aad0 and aae0.\\aag0 gets its value by multiplying together the value of aac0 and aaf0.\\\\Determine the values of x and y. Ensure your results are within a relative precision of 0.001 (or 0.1\%) compared to the ground truth values. Your response should be formatted as: <<<x's value y's value>>>, e.g., if x=1 and y=2, then it should be <<<1 2>>>} \\
    \textbf{Inference steps:} \\
    \prompt{Let's first solve the coefficients of the linear equation.\\aab0 is 10.0\\aaa0 is 6.0\\aac0 = aaa0 + aab0 = 6.0 + 10.0 = 16.0\\aad0 is 6.0\\aae0 is 1.0\\aaf0 = aad0 * aae0 = 6.0 * 1.0 = 6.0\\aag0 = aac0 * aaf0 = 16.0 * 6.0 = 96.0\\Thus, aag0 = 96.0\\\\Next, solve the linear equation:\\\\96.0 x + -6.0 y = -6.0\\-2.0 x + 6.0 y = 0.0\\To eliminate 'y', multiply the first equation by 6 and the second equation by -6. This makes the coefficients of 'y' equal. Subtracting the second equation from the first then gives: 564.0x = -36.0.\\From the equation 564.0x = -36.0, we can solve for x.\\Solving for x, we get x = -0.06382979.\\Substituting x = -0.06382979 into the first original equation, we get: -6.0y = 0.12765957, which gives y = -0.0212766.\\\\\\Thus, the answer is <<<-0.06382979 -0.0212766>>>}
    \end{framed}
    \item \textbf{Boolean Logic:}
    \begin{framed}
    \textbf{Input:} \\
    \prompt{Here is a description of a boolean logic problem:\\aaa is False.\\aab is True.\\The value of aac equals to (aaa OR aab).\\aad is False.\\aae is False.\\The value of aaf equals to (aad AND aae).\\The value of aag equals to (aac AND aaf).\\Compute the result of aag. If the solution can not be calculated, answer 'N/A'. Ensure your final result begins with '<<<' and ends with '>>>', for example, if the answer is True, your final result should be <<<True>>>.} \\
    \textbf{Inference steps:} \\
    \prompt{aaa is False.\\aab is True.\\aac = (aaa OR aab) = (False OR True) = True.\\aad is False.\\aae is False.\\aaf = (aad AND aae) = (False AND False) = False.\\aag = (aac AND aaf) = (True AND False) = False.\\\\Thus, the answer is <<<False>>>}
    \end{framed}

    \item \textbf{Deductive Logic:}
    \begin{framed}
    \textbf{Input:} \\
    \prompt{Here is a description of a deductive logic problem:\\aaa is False.\\aab is False.\\(aaa or aab) -> aac.\\(NOT aac) -> aad.\\The symbol '->' represents a deductive relationship, e.g., A -> B implies that if A is true, then B is true. If A is false, B's truth value remains undetermined (N/A). Deduce the result of aad. If the solution can not be deduced, answer 'N/A'. Ensure your final result begins with '<<<' and ends with '>>>', for example, if the answer is True, your final result should be <<<True>>>.} \\
    \textbf{Inference steps:} \\
    \prompt{aaa is False.\\aab is False.\\(aaa OR aab) = (False OR False) -> aac.\\The premise (False OR False) is False, thus, the value of aac cannot be deduced and is set to N/A.\\(NOT aac) = (NOT N/A) -> aad.\\The value of aad cannot be deduced, and is set to N/A.\\\\Thus, the answer is <<<N/A>>>}
    \end{framed}

    \item \textbf{Abductive Logic:}
    \begin{framed}
    \textbf{Input:} \\
    \prompt{Here is a description of an abductive logic problem:\\(aad or aae) -> aaf.\\(aaa or aab) -> aac.\\(aac or aaf) -> aag.\\Given aag is False, what is the value of aab?\\The symbol '->' represents a deductive relationship, e.g., A -> B implies that if B is false, then A is false. If B is true, A's truth value remains undetermined (N/A). If the solution can not be abduced, answer 'N/A'. Ensure your final result begins with '<<<' and ends with '>>>', for example, if the answer is True, your final result should be <<<True>>>.} \\
    \textbf{Inference steps:} \\
    \prompt{(aac OR aaf) -> aag = False.\\Given aag is False, the value of premise (aac OR aaf) is False, thus, the value of aac is abduced as False.\\(aaa OR aab) -> aac = False.\\Given aac is False, the value of premise (aaa OR aab) is False, thus, the value of aab is abduced as False.\\\\Thus, the answer is <<<False>>>}
    \end{framed}
    
    \item \textbf{Reachability:}
    \begin{framed}
    \textbf{Input:} \\
    \prompt{Given a directed graph:\\aag points to: (None).\\aaf points to: (None).\\aaa points to: (aag, aaf).\\aac points to: (aaa).\\aab points to: (aaa).\\aad points to: (aaa).\\aae points to: (aad).\\Can aag be reached starting from aaf?\\Respond with either '<<<True>>>' if reachable, or '<<<False>>>' otherwise.} \\
    \textbf{Inference steps:} \\
    \prompt{Starting the search process from node aaf with the goal to reach node aag.\\Checking node aaf.\\Node aaf has no unvisited children. Moving back.\\Exhausted all possible paths without reaching node aag.\\\\Thus, the answer is <<<False>>>}
    \end{framed}
    \item \textbf{Max Sum Path:} 
    \begin{framed}
    \textbf{Input:} \\
    \prompt{Given a directed graph with values assigned to each node:\\aag points to: (None).\\aaf points to: (aag).\\aae points to: (aaf, aag).\\aab points to: (aaf).\\aad points to: (aab, aae).\\aac points to: (aag, aad).\\aaa points to: (aac, aag, aae).\\The value of aaa is 3\\The value of aab is 2\\The value of aaf is 5\\The value of aag is 5\\The value of aae is 1\\The value of aad is 7\\The value of aac is 2\\What's the maximum sum path from aab to aae?\\For exmaple, the value of the path A->B->C is obtained by summing the values of nodes A, B, and C. Please format your response as <<<Answer>>>. For example, if the answer is 1, it should be presented as <<<1>>>.} \\
    \textbf{Inference steps:} \\
    \prompt{Starting the search for the maximum sum path from node aab to node aae.\\Reaching node aab with current sum of 2.\\Now, we explore the children of node aab: aaf.\\Reaching node aaf with current sum of 7.\\Now, we explore the children of node aaf: aag.\\Reaching node aag with current sum of 12.\\Node aag has no children. Moving back.\\There is no path from node aab to node aae.\\\\Thus, the answer is <<<N/A>>>}
    \end{framed}

\end{itemize}

\subsection{Training data and testing data}
\subsubsection{Training data}
For mathematical tasks and logical reasoning tasks that utilize T-DAGs, we construct four types of training datasets. Each dataset consists of 500 training samples. All of these types have a depth of 3. The settings are as follows:
\begin{enumerate}
\setlength\itemsep{0em}
    \item width=2, add random links=0, embed random descs=0,
    \item width=2, add random links=1, embed random descs=0,
    \item width=2, add random links=1, embed random descs=1,
    \item width=3, add random links=0, embed random descs=0.
\end{enumerate}

For algorithm tasks, two types of training datasets are generated:
\begin{enumerate}
\setlength\itemsep{0em}
    \item num nodes=7, max links per node=3,
    \item num nodes=10, max links per node=4.
\end{enumerate}

\subsubsection{Testing data}
\label{sec-append-test-data}
We create three types of testing data:

\begin{enumerate}[leftmargin=2em]
\setlength\itemsep{0em}
    \item \textbf{In-Distribution (ID) Test Set:} The difficulty level matches that of the training set. 
      \begin{itemize}
          \item For T-DAGs: depth=4, width=2, with no extra links and random descriptions.
          \item For G-DAGs: num nodes=15 with max links=6.
      \end{itemize}
    \item \textbf{Out-of-Distribution (OOD) Test Set:}
      \begin{itemize}
          \item For T-DAGs: depth=4, width=2, without extra links and random descriptions.
          \item For G-DAGs: num nodes=15 with max links=6.
      \end{itemize}
    \item \textbf{Out-of-Distribution-Hard (OOD-hard) Test Set:}
      \begin{itemize}
          \item For T-DAGs: depth=4, width=2, with one extra link per node and one random description.
          \item For G-DAGs: num nodes=20 with max links=8.
      \end{itemize}
\end{enumerate}

Note that the definition of OOD in our tasks is mainly on the different complexities of the samples that may come with more advanced structures or descriptions.
For model evaluation, when using the \method generated testing data, a zero-shot setting was adopted. For existing benchmarks, few-shot COT examples were provided in the context: 4 examples for GSM8K and SVAMP, 3 for FOLIO and RACO, and 2 for DP and LCS. The results of evaluation in our tasks are presented in \figurename~\ref{fig: finetune results our tasks}.

\subsection{Results of Fine-tuning}
We fine-tuned Llama2-13b-chat with LORA \citep{hu2021lora} for 3 epochs where the rank was $8$, the scaling factor was $16$ and the drop out rate was $0.05$. We used a $0.0003$ learning rate with batch size $128$.
Results on existing benchmarks of the fine-tuned model is in \figurename~\ref{fig: fine-tune results existing benchmark} of the main paper.

\figurename~\ref{fig: finetune results our tasks} displays the results after fine-tuning on our test datasets as described in Sec.\ref{sec-append-test-data}. The performance of \llama on tasks like boolean logic, deductive logic, and reachability significantly improves after fine-tuning on our dataset. However, noticeable gaps remain, particularly in areas such as mathematic tasks, abductive logic and the max sum path.

\vspace{-0.1in}

\begin{figure}[htbp]
\centering
  \includegraphics[width=0.7\textwidth]{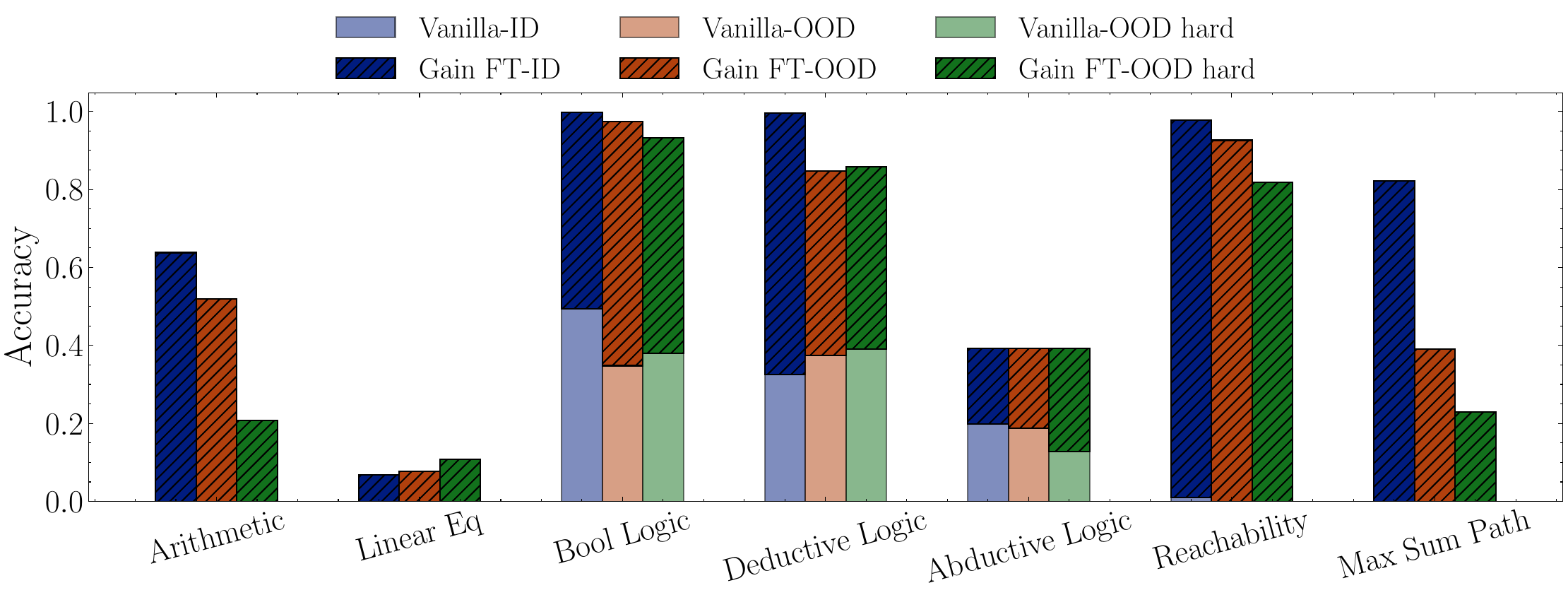}
  \vspace{-.1in}
  \caption{Fine-tuned results on ID and OOD sets of our tasks. For arithmetic, linear equation, reachability and max sum path tasks, the vanilla accuracy is zero.}
\label{fig: finetune results our tasks}
\end{figure}

\blue{
\section{Imbalanced Generated Dataset}
\label{sec-append-imbalanced}
Our algorithm can easily satisfy the balance requirement by meticulously controlling the flexible dynamic generation process. For example, in reachability task, we can drop the generated evaluation samples with 'False' labels until we generate a sample with 'True' label. We presented the results of \chat and \gptfour in balanced datasets in \tablename~\ref{tb-append-balanced-dataset}. The results in balanced datasets are similar to our initial findings: (1) \chat consistently predicted all questions as ``True.'', resulted in a uniform accuracy rate of 50\%. (2) \gptfour demonstrated excellent performance. It maintained significantly higher accuracy rates across all complexity levels.

\begin{table}[htbp]
\centering
\caption{Results of fine-tuned ChatGPT model on general natural langugage understanding tasks}
\label{tb-append-balanced-dataset}
\resizebox{0.77\textwidth}{!}{
\begin{tabular}{ccccccccc}
\toprule
Model & \multicolumn{4}{c}{ChatGPT}   & \multicolumn{4}{c}{GPT4}      \\
\midrule
Complexity     & D1    & D2    & D3    & D4    & D1    & D2    & D3    & D4    \\
Balanced       & 50    & 50    & 50    & 50    & 84.54 & 79.03 & 73.5  & 72.41 \\
Imbalanced     & 63.87 & 54.27 & 53.73 & 52.33 & 85.33 & 83    & 67.67 & 74.33 \\
\bottomrule
\end{tabular}
}
\end{table}

\section{General Language Understanding Ability after Fine-tuning}
\label{append-finetune-general}

We fine-tuned GPT-3.5-turbo-0613 using our generated data on abductive logic and reachability datasets, as GPT-3.5 performs worst on these two datasets. Specifically, we generated 100 samples across complexity levels D1, D2, and D3 for each task. We compared the performance of original and fine-tuned models on several benchmark tasks in GLUE dataset. The performance of abductive logic and reachability task are tested on D4 task (different from fine-tuning dataset). As shown in \tablename~\ref{tb-append-general-ability}, performance on WNLI and QNLI datasets dropped for the fine-tuned model. However, fine-tuned model achieves better results on CoLA, QQP, and MRPC datasets. Despite the mixed results, the overall improvement in several datasets suggests that fine-tuning on our generated datasets does not necessarily hurt the general language understanding ability.

\begin{table}[htbp]
\centering
\caption{Results of fine-tuned ChatGPT model on general natural langugage understanding tasks}
\label{tb-append-general-ability}
\resizebox{\textwidth}{!}{
\begin{tabular}{ccccccccc}
\toprule
\multicolumn{1}{c}{\textbf{}} & \multicolumn{1}{c}{Abductive Logic} & \multicolumn{1}{c}{Reachability} & \multicolumn{1}{c}{SST-2} & \multicolumn{1}{c}{CoLA} & \multicolumn{1}{c}{WNLI} & \multicolumn{1}{c}{QNLI} & \multicolumn{1}{c}{QQP} & \multicolumn{1}{c}{MRPC} \\
\midrule
GPT3.5                        & 55.27                               & 50.00                            & 93.29                     & 77.00                    & 59.15                    & 80.00                    & 76.50                   & 73.0                     \\
GPT3.5-FT                     & 85.10                               & 96.53                            & 93.23                     & 78.00                    & 45.07                    & 72.50                    & 78.00                   & 77.5 \\
\bottomrule
\end{tabular}
}
\end{table}
}

\blue{
\section{Flexibility to Natural Language Tasks}
\label{sec-app-flex}

Finally, we discuss the flexibility of \method while the main focus of this paper is on reasoning tasks.
We show that \method can be easily extended to natural language processing tasks using an initial experiment on sentiment analysis.

Generally speaking, a natural language sentence can be expressed as a syntax tree, similar to DAGs. However, generating sentences through direct syntax tree construction (which is similar to the construction of arithmetic task) presents notable challenges, primarily due to the need for grammatical correctness and the inherent naturalness of these sentences. Nevertheless, \method can still be applied to generate tasks in natural language by utilizing syntax tree templates extracted by existing sentences.
For each sentence in the SST-2 dataset, we initially employ \chat to extract its syntactic structure. Within each syntax tree (i.e., DAGs), we identify the elements that can be modified: namely, nouns (such as names and places) and adjectives. \chat is then used to create five alternative candidates for each of these modifiable components, which are subsequently replaced in an iterative fashion. Throughout this process, we continuously assess whether these replacements alter the original semantic meaning of the sentence. Any changes that result in a semantic shift are discarded.
Note that the graph cannot be randomly generated as the reasoning tasks since we need to constrain the naturalness and grammar correctness of the generated sentences.
As a remedy, the structure of the graph can be abstracted using the template sentences generated by \chat.

We generate three alternative versions for each sentence in the above process, forming our newly generated dataset. We then evaluate the performance of both Flan-T5-large and Llama2-7b models, using the original SST-2 dataset as well as our generated dataset for comparison. The results of these evaluations are detailed in \tablename~\ref{tb-append-flexibility}.
It shows that using our generated samples, the performance drops, indicating that we are creating challenging test sets.
Note that this is an initial study and extending \method to NLP tasks is nontrivial that cannot be covered in this paper, but should be left for future work.

\begin{table}[htbp]
\centering
\caption{Results for Logical Reasoning Tasks}
\label{tb-append-flexibility}
\resizebox{0.4\textwidth}{!}{
\begin{tabular}{ccc}
\toprule
       & Flan-T5-large & Llama2-7b \\
\midrule
Origin & \textbf{93.12}         & \textbf{90.37}     \\
DyVal  & 86.46         & 72.03     \\
\bottomrule
\end{tabular}
}
\end{table}

}

\end{document}